\definecolor{weborange}{rgb}{.8,.3,.3}
\definecolor{webblue}{rgb}{0,0,.8}
\definecolor{internallinkcolor}{rgb}{0,.5,0}
\definecolor{externallinkcolor}{rgb}{0,0,.5}
\providecommand{\remove}[1]{}
\providecommand{\remove}[1]{}
\newcommand{\authnote}[2]{{\bf [{\color{red} #1's Note:} {\color{blue} #2}]}}
\newcommand{\authnote}[2]{}
	\titleclass{\subsubsubsection}{straight}[\subsection]
	\newcounter{subsubsubsection}[subsubsection]
	\renewcommand\thesubsubsubsection{\thesubsubsection.\arabic{subsubsubsection}}
	\renewcommand\paragraph{\@startsection{paragraph}{5}{\z@}%
		{3.25ex \@plus1ex \@minus.2ex}%
		{-1em}%
		{\normalfont\normalsize\bfseries}}
	\renewcommand\subparagraph{\@startsection{subparagraph}{6}{\parindent}%
		{3.25ex \@plus1ex \@minus .2ex}%
		{-1em}%
		{\normalfont\normalsize\bfseries}}
	\def\toclevel@subsubsubsection{4}
	\def\toclevel@paragraph{5}
	\def\toclevel@paragraph{6}
	\def\l@subsubsubsection{\@dottedtocline{4}{7em}{4em}}
	\def\l@paragraph{\@dottedtocline{5}{10em}{5em}}
	\def\l@subparagraph{\@dottedtocline{6}{14em}{6em}}
\newenvironment{protocol}{\begin{mybox} \vspace{-.1in}\begin{proto}}{ \vspace{-.1in} \end{proto}\end{mybox}}
\newenvironment{algorithm}{\begin{mybox} \vspace{-.1in}\begin{algo}}{ \vspace{-.1in} \end{algo}\end{mybox}}
\newenvironment{experiment}{\begin{mybox} \vspace{-.1in}\begin{expr}}{ \vspace{-.1in} \end{expr}\end{mybox}}
\newenvironment{game}{\begin{mybox} \vspace{-.1in}\begin{gm}}{ \vspace{-.1in} \end{gm}\end{mybox}}
\newenvironment{mybox}{\begin{center}\begin{tabular}{|p{0.97\linewidth}|c|}   \hline} {  \\ \hline \end{tabular} \end{center}}			
\let\originalleft\left
\let\originalright\right
\renewcommand{\left}{\mathopen{}\mathclose\bgroup\originalleft}
\renewcommand{\right}{\aftergroup\egroup\originalright}
	\newcommand{\ie}  {i.e.,\xspace}
	\newcommand{\wrt} {with respect to\xspace}
	\newcommand{\wlg} {without loss of generality\xspace}
	\newcommand{\ceil}[1]{\left\lceil #1 \right\rceil}
	\newcommand{\iprod}[1]{\langle #1 \rangle}
	\newcommand{\set}[1]{\ens{#1}}
	\newcommand{\paren}[1]{\left(#1\right)}
	\newcommand{\floor}[1]{\left \lfloor#1 \right \rfloor}
	\newcommand{\eqdef}{:=}
	\newcommand{\N}{{\mathbb{N}}}
	\newcommand{\zo}{\set{0,1}}
	\newcommand{\condition}{\;\ifnum\currentgrouptype=16 \middle\fi|\;}
	\newcommand{\xor}{\oplus}
	\newcommand{\la}{\gets}
	\newcommand{\Encrypt}{\mathsf{Encrypt}}
	\newcommand{\Decrypt}{\mathsf{Decrypt}}
	\newcommand{\KeyGen}{\mathsf{KeyGen}}
	\newcommand{\Setup}{\mathsf{Setup}}
	\newcommand{\Dec}{\mathsf{Dec}}
	\newcommand{\ADA}[3]{\mathsf{ADA}_{#1}^{#2}[#3]}
	\newcommand{\ADAP}{\mathsf{ADA}}
	\newcommand{\negl}{\operatorname{neg}}
	\newcommand{\argmin}{\operatorname*{argmin}}
	\newcommand{\Ensuremath}[1]{\ensuremath{#1}\xspace}
	\newcommand{\tth}[1]{\Ensuremath{#1^{\rm th}}}
	\newcommand{\ith}{\tth{i}}
	\renewcommand{\cref}{\Cref}
	\newtheorem{theorem}{Theorem}[section]
	\newaliascnt{lemma}{theorem}
	\newtheorem{lemma}[lemma]{Lemma}
	\crefname{lemma}{Lemma}{Lemmas}
	\newaliascnt{observation}{theorem}
	\newtheorem{observation}[observation]{Observation}
	\crefname{observation}{Observation}{Observation}
	\newaliascnt{claim}{theorem}
	\crefname{claim}{Claim}{Claims}
	\newaliascnt{corollary}{theorem}
	\newtheorem{corollary}[corollary]{Corollary}
	\crefname{corollary}{Corollary}{Corollaries}
	\newaliascnt{construction}{theorem}
	\crefname{construction}{Construction}{Constructions}
	\newaliascnt{fact}{theorem}
	\newtheorem{fact}[fact]{Fact}
	\crefname{fact}{Fact}{Facts}
	\newaliascnt{proposition}{theorem}
	\crefname{proposition}{Proposition}{Propositions}
	\newaliascnt{conjecture}{theorem}
	\crefname{conjecture}{Conjecture}{Conjectures}
	\newaliascnt{definition}{theorem}
	\newtheorem{definition}[definition]{Definition}
	\crefname{definition}{Definition}{Definitions}
	\newaliascnt{notation}{theorem}
	\crefname{notation}{Notation}{Notation}
	\newaliascnt{assertion}{theorem}
	\crefname{assertion}{Assertion}{Assertion}
	\newaliascnt{assumption}{theorem}
	\crefname{assumption}{Assumption}{Assumption}
	\newaliascnt{remark}{theorem}
	\crefname{remark}{Remark}{Remarks}
	\newaliascnt{question}{theorem}
	\newtheorem{question}[question]{Question}
	\crefname{question}{Question}{Question}
	\newaliascnt{example}{theorem}
	\crefname{exmaple}{Example}{Examples}
	\crefname{equation}{Equation}{Equations}
	\newaliascnt{proto}{theorem}
	\newtheorem{proto}[proto]{Protocol}
	\crefname{proto}{protocol}{protocols}
	\newaliascnt{algo}{theorem}
	\newtheorem{algo}[algo]{Algorithm}
	\crefname{algo}{algorithm}{algorithms}
	\newaliascnt{expr}{theorem}
	\newtheorem{expr}[expr]{Experiment}
	\crefname{experiment}{experiment}{experiments}
	\newaliascnt{gm}{theorem}
	\newtheorem{gm}[gm]{Game}
	\crefname{game}{game}{games}
	\newcommand{\stepref}[1]{Step~\ref{#1}}
	\def\FullBox{$\Box$}
	\def\qed{\ifmmode\qquad\FullBox\else{\unskip\nobreak\hfil
			\penalty50\hskip1em\null\nobreak\hfil\FullBox
			\parfillskip=0pt\finalhyphendemerits=0\endgraf}\fi}
	\def\qedsketch{\ifmmode\Box\else{\unskip\nobreak\hfil
			\penalty50\hskip1em\null\nobreak\hfil$\Box$
			\parfillskip=0pt\finalhyphendemerits=0\endgraf}\fi}
	\newcommand{\eex}[2]{\Ex_{#1}\left[#2\right]}
	\newcommand{\Ex}{{\mathrm E}}
	\renewcommand{\Pr}{{\mathrm {Pr}}}
	\newcommand{\pr}[1]{\Pr\left[#1\right]}
	\newcommand{\ppr}[2]{\Pr_{#1}\left[#2\right]}
	\newcommand{\Pc}{\mathsf{P}}
	\newcommand{\Ac}{\mathsf{A}}
	\newcommand{\tAc}{\widetilde{\Ac}}
	\newcommand{\Fc}{\mathsf{F}}
	\newcommand{\Gc}{\mathsf{G}}
	\newcommand{\Mc}{\mathsf{M}}
	\newcommand{\tMc}{\widetilde{\Mc}}
	\newcommand{\ens}[1]{\{#1\}}
	\newcommand{\size}[1]{\left|#1\right|}
	\newcommand{\ppt}{{\sc ppt}\xspace}
	\def\cB{{\cal B}}
	\def\cD{{\cal D}}
	\def\cE{{\cal E}}
	\def\cJ{{\cal J}}
	\def\cP{{\cal P}}
	\def\cS{{\cal S}}
	\def\cT{{\cal T}}
	\def\cU{{\cal U}}
	\def\cX{{\cal X}}
	\def\tcX{\widetilde{\cX}}
	\def\bbG{{\mathbb G}}
	\def\bbN{{\mathbb N}}
	\newcommand{\Tableofcontents}{
		\thispagestyle{empty}
		\pagenumbering{gobble}
		\clearpage
		\tableofcontents
		\thispagestyle{empty}
		\clearpage
		\pagenumbering{arabic}
	}
	\newcommand{\ty} {\tilde{y}}
	\newcommand{\tq} {\tilde{q}}
	\newcommand{\tQ} {\widetilde{Q}}
	\newcommand{\modd}{{\:\sf{mod }\:}}
	\newcommand{\tY}{\widetilde{Y}}
	\newcommand{\poly}{{\rm poly}}
	\let\xx@thm\@thm
	\newcommand{\id}{{\sf id}}
	\newcommand{\m}{{\sf m}}
	\newcommand{\st}{{\sf st}}
	\newcommand{\mpk}{{\sf mpk}}
	\newcommand{\msk}{{\sf msk}}
	\newcommand{\sk}{{\sf sk}}
	\newcommand{\pk}{{\sf pk}}
	\newcommand{\skid}{\sk_{\id}}
	\newcommand{\ct}{{\sf ct}}
\newcommand{\Enote}[1]{\authnote{Eliad}{#1}}
\newcommand{\ignore}[1]{}
\title{Adaptive Data Analysis in a Balanced Adversarial Model\\
Full Version}
\author{Anonymous Author(s)}
\title{Adaptive Data Analysis in a Balanced Adversarial Model}
\author{
Kobbi Nissim\thanks{Department of Computer Science, Georgetown University. E-mail: {\tt kobbi.nissim@georgetown.edu}. Work partially supported by NSF grant No. CNS-2001041 and a gift to Georgetown University.} 
\and Uri Stemmer\thanks{Blavatnik School of Computer Science, Tel Aviv University, and Google Research. E-mail: \texttt{u@uri.co.il}. Work partially supported by the Israel Science Foundation (grant 1871/19) and by
	Len Blavatnik and the Blavatnik Family foundation.}
\and Eliad Tsfadia\thanks{Department of Computer Science, Georgetown University. E-mail: \texttt{eliadtsfadia@gmail.com}.  Work supported in part by the Fulbright Program and a gift to Georgetown University.}}
\begin{document}

\maketitle

\begin{abstract}

In adaptive data analysis, a mechanism gets $n$ i.i.d.\ samples from an unknown distribution $\cD$, and
is required to provide accurate estimations to a sequence of adaptively chosen statistical queries with respect to $\cD$. 
\citet{HU14} and \citet{SU15} showed that, in general, it is computationally hard to answer more than $\Theta(n^2)$ adaptive queries, assuming the existence of one-way functions. 

However, these negative results strongly rely on an adversarial model that significantly advantages the adversarial analyst over the mechanism, as the analyst, who chooses the adaptive queries, also chooses the underlying distribution $\cD$. 
This imbalance raises questions with respect to the applicability of the obtained hardness results -- an analyst who has complete knowledge of the underlying distribution $\cD$ would have little need, if at all, to issue statistical queries to a mechanism which only holds a finite number of samples from $\cD$.

We consider more restricted adversaries, called \emph{balanced}, where each such adversary consists of two separate algorithms: The \emph{sampler} who is the entity that chooses the distribution and provides the samples to the mechanism, and the \emph{analyst} who chooses the adaptive queries, but has no prior knowledge of the underlying distribution (and hence has no a priori advantage with respect to the mechanism). 

We improve the quality of previous lower bounds by revisiting them using an efficient \emph{balanced} adversary, under standard public-key cryptography assumptions. We show that these stronger hardness assumptions are unavoidable in the sense that any computationally bounded \emph{balanced} adversary that has the structure of all known attacks, implies the existence of public-key cryptography.

\end{abstract}

\Tableofcontents

\section{Introduction}

Statistical validity is a widely recognized crucial feature of modern science. Lack of validity -- popularly known as the {\em replication crisis} in science poses a serious threat to the scientific process and also to the public's trust in scientific findings.

One of the factors leading to the replication crisis is the inherent adaptivity in the data analysis process. To illustrate adaptivity and its effect, consider a data analyst who is testing a specific research hypothesis. The analyst gathers data, evaluates the hypothesis empirically, and often finds that their hypothesis is not supported by the data, leading to the formulation and testing of more hypotheses. If these hypotheses are tested and formed based on the same data (as acquiring fresh data is often expensive or even impossible), then the process is of {\em adaptive data analysis} (ADA) because the choice of hypotheses depends on the data. However, ADA no longer aligns with classical statistical theory, which assumes that hypotheses are selected independently of the data (and preferably before gathering data). ADA may lead to overfitting and hence false discoveries.

Statistical validity under ADA is a fundamental problem in statistics, that has received only partial answers. A recent line of work, initiated by~\cite{DFHPRR14} and includes
\citep{HU14,DworkFHPRR-nips-2015,DworkFHPRR-science-15,SU15,SteinkeU15,BNSSSU15,RogersRST16,RussoZ16,Smith17b,FeldmanS17,NSSSU18,FeldmanS18,ShenfeldL19,JungLN0SS20,FishRR20,DaganK22,KontorovichSS22,DinurSWZ23,Blanc23}
has resulted in new insights into ADA and robust paradigms for guaranteeing statistical validity in ADA. A major objective of this line of work is to design optimal mechanisms $\Mc$ that initially obtain a dataset $\cS$ containing $n$ i.i.d.\ samples from an unknown distribution $\cD$, and then answers adaptively chosen queries with respect to $\cD$. Importantly, all of $\Mc$'s answers must be accurate with respect to the underlying distribution $\cD$, not just w.r.t.\ the empirical dataset $\cS$. The main question is how to design an efficient mechanism that provides accurate estimations to adaptively chosen statistical queries, where the goal is to maximize the number of queries $\Mc$ can answer. This objective is achieved by providing both upper- and lower-bound constructions, where the lower-bound constructions demonstrate how an adversarial analyst making a small number of queries to an arbitrary $\Mc$ can invariably force $\Mc$ to err. The setting for these lower-bound proofs is formalized as a two-player game between a mechanism $\Mc$ and an adversary $\Ac$ as in  \cref{game:intro:ADA}.	


\vspace{-10px}
\begin{game}[ADA game between a mechanism $\Mc$ and an adversarial analyst $\Ac$]\label{game:intro:ADA}	
	~
	\begin{itemize}
		
		\item $\Mc$ gets a dataset $\cS$ of $n$ i.i.d.\ samples from an \textbf{unknown} distribution $\cD$ over $\cX$.
		
		\item For $i=1,\ldots,\ell$:
		\begin{itemize}
			\item $\Ac$ sends a query $q_i \colon \cX \mapsto [-1,1]$ to $\Mc$.
			
			\item $\Mc$ sends an answer $y_i \in [-1,1]$ to $\Ac$.
			
			(As $\Ac$ and $\Mc$ are stateful, $q_i$ and $y_i$ may depend on the previous messages.)
		\end{itemize}
		
		\item[] $\Mc$ fails if $\exists i \in [\ell]$ s.t. $\size{y_i - \eex{x \sim \cD}{q_i(x)}}  > 1/10$.
		
	\end{itemize}

\end{game}

A question that immediately arises from the description of \cref{game:intro:ADA} is to whom should the distribution $\cD$ be unknown, and how to formalize this lack of knowledge. 
Ideally, the mechanism $\Mc$ should succeed with high probability for every unknown distribution $\cD$ and against any adversary $\Ac$.

In prior work, this property was captured by letting the adversary choose the distribution $\cD$ at the outset of \cref{game:intro:ADA}.
Namely, the adversary $\Ac$ can be seen as a pair of algorithms $(\Ac_1,\Ac_2)$, where $\Ac_1$ chooses the distribution $\cD$ and sends a state $\st$ to $\Ac_2$ (which may contain the entire view of $\Ac_1$), and after that, $\Mc$ and $\Ac_2(\st)$ interacts in \cref{game:intro:ADA}.
In this adversarial model,  \citet{HU14} and \citet{SU15} showed that, assuming the existence of one way functions, it is computationally hard to answer more than $\Theta(n^2)$ adaptive queries. These results match the state-of-the-art constructions \citep{DFHPRR14,DworkFHPRR-nips-2015,DworkFHPRR-science-15,SteinkeU15,BNSSSU15,FeldmanS17,FeldmanS18,DaganK22,Blanc23}.\footnote{Here is an example of a mechanism that handles $\tilde{\Theta}(n^2)$ adaptive queries using differential privacy: Given a query $q_i$, the mechanism returns an answer $y_i = \frac1n\sum_{x \in \cS} x + \nu_i$ where the $\nu_i$'s are  independent Gaussian noises, each with standard deviation of $\tilde{O}(\sqrt{\ell}/n)$. The noises guarantee that the entire process is ``private enough" for avoiding overfitting in the ADA game, and accuracy is obtained whenever $\ell = \tilde{O}(n^2)$.}
In fact, each such negative result was obtained by constructing a {\em single} adversary $\Ac$ that fails {\em all} efficient mechanisms. This means that, in general, it is computationally hard to answer more than $\Theta(n^2)$ adaptive queries even when the analyst's algorithm is known to the mechanism. On the other hand, in each of these negative results, the adversarial analyst has a significant advantage over the mechanism -- their ability to select the distribution $\cD$. This allows the analyst to inject random trapdoors in $\cD$ (e.g., keys of an encryption scheme) which are then used in forcing a computationally limited mechanism to fail, as the mechanism does not get a hold of the trapdoor information.

For most applications, the above adversarial model seems to be too strong. For instance, a data analyst who is testing research hypotheses usually has no knowledge advantage about the distribution that the mechanism does not have. 
In this typical setting, even if the underlying distribution $\cD$ happens to have a trapdoor, if the analyst recovers the trapdoor then the mechanism should also be able to recover it and hence disable its adversarial usage.

In light of this observation, we could hope that in a balanced setting, where the underlying distribution is unknown to both the mechanism and the analyst, it would always be possible for $\Mc$ to answer more than $ O(n^2)$ adaptive queries. To explore this possibility, we introduce what we call a {\em balanced} adversarial model.

\begin{definition}[Balanced Adversary]
	A \emph{balanced} adversary $\Ac$ consists of two isolated algorithms: The \emph{sampler} $\Ac_1$, which chooses a distribution $\cD$ and provides i.i.d.\ samples to the mechanism $\Mc$, and the \emph{analyst} $\Ac_2$, which asks the adaptive queries. No information is transferred from $\Ac_1$ to $\Ac_2$. See \cref{game:intro:balanced-ADA}.
\end{definition}

\begin{game}[The ADA game between a mechanism $\Mc$ and a balanced adversary $\Ac = (\Ac_1, \Ac_2)$]\label{game:intro:balanced-ADA}	
	~
	\begin{itemize}
		
		\item $\Ac_1$ chooses a distribution $\cD$ over $\cX$ (specified by a sampling algorithm) and provides $n$ i.i.d.\ samples $\cS$ to $\Mc$ (by applying the sampling algorithm $n$ times). \label{step:intro:sampling}
		\item[] /* $\Ac_1$ does not provide $\Ac_2$ with any information */		
		
		\item $\Mc$ and $\Ac_2$ play \cref{game:intro:ADA} (\wrt $\cD$ and $\cS$).

		\item[] $\Mc$ fails if and only if it fails in \cref{game:intro:ADA}.
		
	\end{itemize}
	
\end{game}

Note that the difference between the balanced model and the previous (imbalanced) one is whether $\Ac_1$ can send a state to $\Ac_2$ after choosing the distribution $\cD$ (in the imbalanced model it is allowed, in contrast to the balanced model).\footnote{An additional (minor) difference is that we chose in our model to let $\Ac_1$ also provide the i.i.d. samples to $\Mc$. This is only useful for \cref{thm:intro:KA} as we need there that choosing $\cD$ and sampling from $\cD$ are both computationally efficient (which are simply captured by saying that $\Ac_1$ is computationally bounded).}

We remark that the main advantage of the balanced model comes when considering a publicly known sampler $\Ac_1$ (as we do throughout this work). This way, $\Ac_1$ captures the common knowledge that both the mechanism $\Mc$ and the analyst $\Ac_2$ have about the underlying distribution $\cD$. 


\begin{question}\label{question}
	Do the lower-bounds proved in prior work hold also for balanced adversaries?
\end{question}

In this work we answer \cref{question} in the positive. 
We do that using a publicly known analyst $\Ac_2$ (which even makes it stronger than what is required for a lower bound). I.e., even though the sampler $\Ac_1$ and the analyst $\Ac_2$ are publicly known and cannot communicate with each other, they fail any computationally bounded mechanism.
However, our lower-bound is based on stronger hardness assumptions than in prior work, namely, we use public-key cryptography.

\subsection{Our Results}

Our first result is a construction of a \emph{balanced} adversary forcing any computationally bounded mechanim to fail in \cref{game:intro:balanced-ADA}.

\begin{theorem}[Computational lower bound, informal]\label{thm:intro:IBE}
	There exists an efficient \emph{balanced} adversary $\Ac = (\Ac_1, \Ac_2)$ that fails any computationally bounded mechanism $\Mc$ using $\Theta(n^2)$ adaptive queries. Moreover, it does so by choosing a distribution over a small domain.
\end{theorem}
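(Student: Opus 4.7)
The plan is to build the balanced adversary around Identity-Based Encryption (IBE), using it to transplant the fingerprinting-code attacks of \cite{HU14,SU15} into the balanced setting. The sampler $\Ac_1$ runs the IBE key generation to get $(\MPK, \MSK)$ with a security parameter $\kappa = n^{o(1)}$, and defines a distribution $\cD$ supported on the set $\{(\MPK, \id, \sk_{\id}) : \id \in [N]\}$ for $N = \poly(n)$ (say $N = n^2$), where $\sk_{\id} \leftarrow \Extract(\MSK, \id)$. A sample is obtained by drawing $\id \gets [N]$ uniformly and outputting the corresponding triple; the support thus has size $N = \poly(n)$, giving the claimed small-domain distribution. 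Note that $\Ac_2$ receives no information from $\Ac_1$, so it does not know $\MPK$ a priori.

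The analyst $\Ac_2$ then operates in two phases. In the first phase, it recovers $\MPK$ via $|\MPK|=\poly(\kappa)$ simple statistical queries of the form $q^{\MPK}_j(x)=\text{($j$-th bit of the $\MPK$-component of $x$)}$; since every sample carries the same $\MPK$, the true expectation of $q^{\MPK}_j$ is exactly the $j$-th bit, so the mechanism's $1/10$-accurate answer reveals it. In the second phase, $\Ac_2$ fixes a fingerprinting code $\cC \in \zo^{N \times \ell}$ with $\ell = \Theta(n^2)$, and for each round $j \in [\ell]$ forms ciphertexts $ct_{\id,j} \gets \Enc(\MPK, \id, \cC[\id,j])$ for every $\id \in [N]$ and sends the query
\[
q_j(x) \;=\; \Dec\bigl(\sk_{\id},\, ct_{\id,j}\bigr), \qquad\text{where } x=(\MPK,\id,\sk_{\id}).
\]
By IBE correctness, $q_j(x) = \cC[\id,j]$ on the support of $\cD$, so $\Exp_{\cD}[q_j] \approx 1/2$ while the empirical mean on the mechanism's dataset $\cS$ is the combined codeword $\tfrac1n\sum_i \cC[\id_i,j]$. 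A $1/10$-accurate mechanism must track this combined codeword up to an additive $1/10$, so by the fingerprinting guarantee the tracing algorithm applied to the transcript identifies (w.h.p.) the set $I=\{\id_1,\ldots,\id_n\}$. A final ``trap'' query that is $1$ on $I$ and $0$ elsewhere then has true mean $\leq n/N = o(1)$ but empirical mean $1$, forcing the failure.

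The main obstacle is justifying that the IBE-encoded queries behave, from $\Mc$'s viewpoint, like the truly random columns assumed by the statistical fingerprinting lower bound. The mechanism does hold $\sk_{\id_i}$ for $\id_i \in I$, and the attack intentionally allows it to decrypt those entries (the fingerprinting argument permits $\Mc$ to see the rows it ``owns''). The critical property is that the codebits $\cC[\id,j]$ for $\id \notin I$ remain computationally hidden from $\Mc$; this is exactly the multi-user IBE security game, and by a standard hybrid over identities we can replace each $\Enc(\MPK,\id,\cC[\id,j])$ (for $\id\notin I$) with $\Enc(\MPK,\id,0)$ with only a negligible change in $\Mc$'s success probability. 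After this replacement, the transcript is a function of $I$ and the public randomness of $\cC$ only, and prior statistical (non-cryptographic) fingerprinting tracing lower bounds, together with the accuracy constraint on $\Mc$, yield the desired contradiction.

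Finally, I will verify that parameters align: taking $\kappa = n^{o(1)}$ keeps the $\poly(\kappa)$ bits needed to extract $\MPK$ well below the $\Theta(n^2)$ query budget, taking $N = n^2$ makes the trap query's true mean $o(1/10)$ so the final deviation is genuine, and instantiating the fingerprinting code at length $\ell = \tilde\Theta(n^2)$ matches the upper bound. IBE can be based on standard public-key assumptions (e.g., pairing-based or LWE-based constructions), giving the ``public-key cryptography'' hypothesis advertised in the introduction.
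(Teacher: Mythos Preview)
Your high-level plan is exactly the paper's: have $\Ac_1$ set up an IBE scheme and define $\cD$ as uniform over $\{(j,\mpk,\sk_j)\}_j$; have $\Ac_2$ first extract $\mpk$ by querying its bits, then wrap the natural-mechanism attack of \cite{SU15} by encrypting each value $\tq(j)$ under identity $j$; and argue via IBE security that ciphertexts for identities outside the sample can be swapped for encryptions of $0$, which turns any \ppt $\Mc$ into a \emph{natural} mechanism to which \cref{thm:natural_attack} applies. The paper carries this out with $m=2000n$ identities and invokes \cref{thm:natural_attack} as a black box rather than re-describing the fingerprinting code, but that is a cosmetic difference.

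There is, however, a genuine slip in your middle paragraph that you should fix before it propagates. You write that $\Exp_{\cD}[q_j]\approx 1/2$ and that ``a $1/10$-accurate mechanism must track this combined codeword.'' Neither is right, and together they would break the argument: if the true mean of every query were $\approx 1/2$, the mechanism could simply answer $1/2$ on every round and be $1/10$-accurate forever, with no tracing possible. Accuracy is measured against $\Exp_{\cD}[q_j]$, \emph{not} the empirical mean on $\cS$; the whole point of the (interactive) fingerprinting attack is that the per-column bias $p_j$ is chosen by the analyst and \emph{varies}, so an accurate $\Mc$ must output something close to $p_j$, and the tracing argument shows that any such output computed only from the rows in $I$ (which is what IBE security buys you) must correlate with those rows. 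In other words, after your IBE hybrid you have reduced to the setting of \cref{thm:natural_attack}/\cref{obs:emp_mean}, and you should invoke that lemma (or the interactive fingerprinting code of \cite{SU15}) rather than the ``$\approx 1/2$'' story. Once you replace that paragraph with the correct reduction to the natural-mechanism bound, the rest of your outline (parameter choices, the trap query, the IBE hybrid over identities $\id\notin I$) goes through and matches the paper's proof.
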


Our construction in \cref{thm:intro:IBE} uses the structure of previous attacks of~\cite{HU14} and \cite{SU15}, but relies on a stronger hardness assumption of public-key cryptography. We prove that this is unavoidable.


\begin{theorem}[The necessity of public-key cryptography for proving Theorem~\ref{thm:intro:IBE}, informal]\label{thm:intro:KA}
	Any computationally bounded \emph{balanced} adversary that follows the structure of all currently known attacks, implies the existence of public-key cryptography (in particular,  a \emph{key-agreement} protocol).
\end{theorem}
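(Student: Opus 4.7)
The plan is to prove \cref{thm:intro:KA} by turning any balanced adversary of the assumed ``known-attack'' form into a two-party key-agreement protocol, since key agreement is already a form of public-key cryptography. The intuition is that in the balanced model the only asymmetry between $\Ac_2$ and a computationally bounded $\Mc$ is exactly the asymmetry one wants in key exchange: the sampler $\Ac_1$ privately generates $\cD$ together with some internal state, the mechanism $\Mc$ sees only $n$ samples from $\cD$, and the analyst $\Ac_2$---who has no private information---must nevertheless extract from the interaction something that $\Mc$ cannot.

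First I would formalize the ``structure of all currently known attacks'' along the lines of \cite{HU14,SU15}: the sampler $\Ac_1$ outputs a pair $(\cD,s)$, where $s$ is a trapdoor that determines $\cD$ in the sense that a party holding $s$ can evaluate $\Ex_{x\sim\cD}[q(x)]$ for any query $q$ in polynomial time. The analyst $\Ac_2$ operates in two phases: a \emph{learning phase} that issues queries whose accurate answers let $\Ac_2$ reconstruct $s$ with non-negligible probability, followed by an \emph{attack phase} that uses the reconstructed $s$ to drive $\Mc$ into error. Under this structural assumption, I would define the following protocol: Alice runs $(\cD,s)\gets\Ac_1$, draws $n$ i.i.d.\ samples from $\cD$, and sends them over the public channel; Bob then runs any efficient ``honest'' mechanism $\Mc$ (for example, the empirical-mean mechanism) on the received samples, executes only the learning phase of $\Ac_2$ against $\Mc$, and outputs the trapdoor $\tilde s$ it recovers. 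The common key is $s$.

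Correctness follows directly from the assumption that $\Ac_2$'s learning phase recovers $s$ with non-negligible probability. For secrecy, I would argue by contradiction: if an efficient eavesdropper Eve could compute $s$ from the $n$ public samples, then one could build an efficient mechanism $\Mc^*$ that, on a dataset of $n$ samples, first runs Eve to obtain $s$ and then answers every query $q$ by the value $\Ex_{x\sim\cD}[q(x)]$, which is polynomial-time computable from $s$ by the structural assumption. Such an $\Mc^*$ would never err by more than $1/10$ and hence could not be fooled by any analyst, contradicting the assumption that $\Ac=(\Ac_1,\Ac_2)$ is a universal balanced adversary. Standard amplification then boosts this weak agreement/weak secrecy guarantee to a full key-agreement protocol, from which public-key cryptography follows.

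The main obstacle, I expect, lies in pinning down the ``structure of all currently known attacks'' broadly enough to genuinely capture HU14, SU15, and their variants, while preserving the clean two-phase decomposition needed for the reduction. In particular, one must be careful that the learning phase remains meaningful when $\Ac_2$ faces Bob's \emph{simulated} $\Mc$, which is built only from $n$ samples rather than from $\cD$ directly: the structural assumption must guarantee that some efficient samples-only mechanism answers the learning queries accurately enough for $s$ to be recoverable. A secondary technical issue is the gap between the weak agreement/secrecy obtained from a one-shot execution and the standard---but not entirely free---amplification step needed to reach a full-fledged key-agreement conclusion.
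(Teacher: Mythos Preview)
Your protocol has a genuine gap in the role assignment. You place the sampler on Alice's side, send the $n$ samples over the \emph{public} channel, and have Bob locally simulate both the honest mechanism and the learning phase of $\Ac_2$. But then Bob's entire computation is a (possibly randomized) polynomial-time function of information that Eve also sees. Eve can run exactly Bob's algorithm on the public samples and output the same $\tilde s$ with the same probability. Your own secrecy reduction actually exposes this: you argue that if Eve recovers $s$ from the samples then one can build a mechanism $\Mc^*$ that recovers $s$ and answers all queries perfectly, contradicting the attack. That argument applies verbatim with Bob in place of Eve, so under your structural hypothesis either Bob cannot recover $s$ (correctness fails) or no successful attack exists. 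In short, your protocol cannot be both correct and secret.

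The underlying issue is the formalization of ``structure of known attacks.'' In the balanced model $\Ac_2$ has no private input, so the only asymmetry between $\Ac_2$ and an eavesdropper is $\Ac_2$'s \emph{own random coins}. The paper captures this by the pair of properties: (i) from its full view (coins plus transcript) the analyst can compute the true mean $q_\ell(\cD)$ of the last query, and (ii) from the transcript alone no \ppt algorithm can. The key-agreement protocol then takes $\Pc_1$ to be the sampler \emph{together with} the empirical-mean mechanism (so the samples remain private), and $\Pc_2$ to be the analyst; the public channel carries the queries and answers, not the samples. The shared value is $q_\ell(\cD)$: $\Pc_1$ estimates it using fresh independent samples from $\cD$, and $\Pc_2$ computes it from the analyst's view via property (i). Secrecy follows because an eavesdropper who could approximate $q_\ell(\cD)$ from the transcript would give the mechanism an accurate last-round strategy, contradicting property (ii). The resulting approximate agreement is then amplified to full key agreement. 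So the fix is to keep the samples on the mechanism's side, make the analyst a separate party, and take the agreed value to be the last query's true mean rather than a sampler-side trapdoor.
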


In \cref{sec:intro:tech} we provide proof sketches of \cref{thm:intro:IBE,thm:intro:KA}, where the formal proofs appear in \cref{sec:lower_bound,sec:KA} (respectively).

\paragraph{Potential Consequences for the Information Theoretic Setting.}

\cref{thm:intro:KA} has immediate implication to the information theoretic setting, and allow for some optimism regarding the possibility of constructing an inefficient mechanism that answers many adaptive queries.

It is known that an inefficient mechanism can answer exponentially many adaptive queries, but such results have a strong dependency on the domain size. For instance, the Private Multiplicative Weights algorithm of \cite{HardtR10} can answer $2^{\tilde{O}\paren{n / \sqrt{\log \size{\cX}}}}$ adaptive queries accurately. However, this result is not useful whenever $n \leq O( \sqrt{\log \size{\cX}})$. 
Indeed, \cite{SU15} showed that this dependency is unavoidable in general, by showing that large domain can be used for constructing a similar, unconditional, adversary that fails any computationally unbounded mechanism after $\Theta(n^2)$ queries.
Our \cref{thm:intro:KA} implies that such an attack cannot be implemented in the balanced setting, which gives the first evidence that there might be a separation between the computational and information theoretic setting under the balanced adversarial model (in contrast with the imbalanced model).

\begin{corollary}\label{cor:intro:IT}
	There is no \emph{balanced} adversary that follows the structure of all currently known attacks, and fails any (computationally \emph{unbounded}) mechanism.
\end{corollary}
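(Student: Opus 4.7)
The plan is to derive \cref{cor:intro:IT} as the contrapositive of \cref{thm:intro:KA}, coupled with the classical impossibility of information-theoretic key-agreement over a public channel. First I would assume toward contradiction that there exists a balanced adversary $\Ac = (\Ac_1,\Ac_2)$ that obeys the structural template common to all currently known attacks and that forces every \emph{computationally unbounded} mechanism $\Mc$ to fail in \cref{game:intro:balanced-ADA} after $\Theta(n^2)$ queries. I would then apply verbatim the transformation used to prove \cref{thm:intro:KA}, which converts any such structured $\Ac$ into a two-party key-agreement protocol $\Pi$: one party simulates $\Ac_1$ (holding the distribution and the dataset), the other party simulates $\Ac_2$ (producing the adaptive queries), and the two parties exchange exactly the transcript of $\Ac_2$'s interaction with a shared simulation of the mechanism. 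The attacker on $\Pi$, who sees the public transcript and must distinguish/recover the key, plays precisely the role of a would-be mechanism $\Mc$ trying to answer the queries accurately.

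The key observation is that the reduction underlying \cref{thm:intro:KA} is black-box in the computational power it spends on the eavesdropper: an eavesdropper that runs in time $T$ translates into a mechanism running in time $\poly(T,n)$, and conversely. Thus, the computational-bound assumption on $\Mc$ in \cref{thm:intro:KA} is exactly what makes the resulting key-agreement only computationally secure. Under the contradiction assumption of \cref{cor:intro:IT}, however, the hypothetical $\Ac$ defeats \emph{every} mechanism, including unbounded ones; running the same reduction therefore yields a key-agreement protocol $\Pi$ that is information-theoretically secure against an unbounded eavesdropper who observes the entire public transcript.

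To conclude, I would invoke the classical impossibility of information-theoretic key agreement in the plain public-channel model (essentially Shannon's perfect-secrecy bound, or the folklore observation that an unbounded eavesdropper can, conditional on the transcript, sample from the distribution over both parties' private views and recover any bit they agree on). Since $\Ac_1$ and $\Ac_2$ begin with no shared randomness and communicate only over a channel that the eavesdropper fully observes, no such $\Pi$ can exist, contradicting the existence of $\Ac$.

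The main obstacle will be checking that the reduction from \cref{thm:intro:KA} is genuinely complexity-preserving at the adversary step, \emph{i.e.}, that its analysis does not secretly rely on the mechanism being polynomial-time (for instance, by using a cryptographic simulator or a sampling step that only works against bounded $\Mc$). Provided the translation from ``mechanism that succeeds'' to ``eavesdropper that breaks key-agreement'' is information-theoretic, as the black-box structure of all known attacks suggests, the corollary follows with no further work; if some step of \cref{thm:intro:KA} were inherently computational, one would instead need to redo that step directly against an unbounded $\Mc$, which for the specific attack templates of~\cite{HU14,SU15} is routine because the relevant ``decoding'' procedures are statistical.
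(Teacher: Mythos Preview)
Your proposal is correct and follows essentially the same approach as the paper: assume such a balanced adversary exists, feed it into the construction behind \cref{thm:intro:KA} to obtain a key-agreement protocol, observe that since the adversary fails \emph{unbounded} mechanisms the resulting protocol is secure against \emph{unbounded} eavesdroppers, and conclude by the impossibility of information-theoretic key agreement. The paper's own argument is a two-line version of exactly this; your additional care in verifying that the reduction of \cref{thm:intro:KA} is complexity-preserving (i.e., that an eavesdropper of power $T$ maps to a mechanism of power $\poly(T,n)$ with no hidden computational simulator) is a legitimate point to raise and is indeed satisfied by the construction, but it does not constitute a different route.
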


In order to see why \cref{cor:intro:IT} holds,  suppose that we could implement such kind of attack using a \emph{balanced} adversary. Then by \cref{thm:intro:KA}, this would imply that we could construct an information-theoretic key agreement protocol (i.e., a protocol between two parties that agree on a key that is secret from the eyes of a computationally \emph{unbounded} adversary that only sees the transcript of the execution). But since the latter does not exist, we conclude that such a  \emph{balanced} adversary does not exists either. In other words, we do not have a negative result that rules out the possibility of constructing an inefficient mechanism that can answer many adaptive queries of a \emph{balanced} adversary, and we know that if a negative result exists, then by Theorem~\ref{thm:intro:KA} it cannot follow the structure of  \citet{HU14,SU15}.

\subsection{Comparison with \cite{Elder16}}\label{sec:Elder}

The criticism about the lower bounds of \citet{HU14,SU15} is not new and prior work has attempted at addressing them with only partial success.

For example, \citet{Elder16} presented a similar ``balanced" model (called ``Bayesian ADA"), where both the analyst and the mechanism receive a \emph{prior} $\cP$ which is a family of distributions, and then the distribution $\cD$ is drawn according to $\cP$ (unknown to both the mechanism and the analyst).

From an information theoretic point of view,  this model is equivalent to ours when the sampler $\Ac_1$ is publicly known, since $\Ac_1$ simply defines a prior. But from a computational point of view, defining the sampling process (i.e., sampling $\cD$ and the i.i.d.\ samples from it) in an algorithmic way is better when we would like to focus on computationally bounded samplers.

\cite{Elder16} only focused on the information-theoretic setting. His main result is that a certain family of mechanisms (ones that only use the posterior means) cannot answer more than $\tilde{O}(n^4)$ adaptive queries. This, however, does not hold for any mechanism's strategy. In particular, it does not apply to general computationally efficient mechanisms. 
Our negative result is quantitatively stronger ($n^2$ vs $n^4$) and it applies for all computationally efficient mechanisms.\footnote{Our result is not directly comparable to that of \cite{Elder16}, because our negative result does not say anything for non-efficient mechanisms, while his result does rule out a certain family of non-efficient mechanisms.}

\cref{table:comp} compares the attack in \cref{thm:intro:IBE} with the previous attacks (ignoring computational hardness assumptions).

\begin{table}
	\begin{tabular}{|c|c|c|c|c|}
		\hline
		& \textbf{Balanced?} & \textbf{Class of Mechanisms} & \textbf{\# of Queries}  & \textbf{Dimension $(\log \size{\cX})$} \\ \hline
		\cite{SU15}    & {\color{red} No}                  &  {\color{olive}{\ppt} Algorithms}                                                 & {\color{green} $\tilde{O}(n^2)$}            &  {\color{green} $n^{o(1)}$}        \\ \hline
		\cite{SU15}      & {\color{red} No}                  & {\color{green}{All}}                                               & {\color{green} $\tilde{O}(n^2)$}                &      {\color{red} $O(n^2)$}    \\ \hline
		\cite{Elder16} & {\color{green} Yes}                  & {\color{red}{Certain Family}}                                       & {\color{red} $\tilde{O}(n^4)$}  	&     {\color{red} $\tilde{O}(n^4)$}   \\ \hline
		\cref{thm:intro:IBE}                             & {\color{green} Yes}                  & {\color{olive}{\ppt} Algorithms}                                                  & {\color{green} $\tilde{O}(n^2)$}      &    {\color{green} $n^{o(1)}$} \\ \hline
	\end{tabular}
	\caption{Comparison between the attacks in adaptive data analysis.}
	\label{table:comp}
\end{table}

\subsection{Techniques}
\label{sec:intro:tech}

We follow a similar technique to that used in \cite{HU14} and \cite{SU15}, i.e., a reduction to a restricted set of mechanisms, called \emph{natural}.

\begin{definition}[Natural mechanism~\citep{HU14}]\label{def:intro:natural}
	A mechanism $\Mc$ is \emph{natural} if, when given a sample $\cS = (x_1,\ldots,x_n) \in \cX^n$ and a query $q \colon \cX \rightarrow [-1,1]$, $\Mc$ returns an answer that is a function solely of $(q(x_1), \ldots, q(x_n))$. In particular, $\Mc$ does not evaluate $q$ on other data points of its choice.
\end{definition}

\cite{HU14} and \cite{SU15} showed that there exists an adversarial analyst $\tAc$ that fails any \emph{natural} mechanism $\Mc$, even when $\Mc$ is computationally unbounded, and even when $\cD$ is chosen to be the uniform distribution over $\set{1,2,\ldots,m=2000n}$ (I.e., $\cD$ is known to everyone). While general mechanisms could simply use the knowledge of the distribution to answer any query, \emph{natural} mechanisms are more restricted, and can only provide answers based on the $n$-size dataset $\cS$ that they get. The restriction to natural mechanisms allowed \cite{SU15} to use \emph{interactive fingerprinting codes}, which enable to reveal $\cS$ using $\Theta(n^2)$ adaptive queries when the answers are accurate and correlated with $\cS$.

To construct an attacker $\Ac$ that fails any computationally bounded mechanism (and not just natural mechanisms), prior work forced the mechanism to behave naturally by using a private-key encryption scheme.  
More specifically, the adversary first samples $m$ secret keys $\sk_1,\ldots,\sk_m$, and then defines $\cD$ to be the uniform distribution over the pairs $\set{(j,\sk_j)}_{j=1}^m$.  
The adversary then simulates an adversary $\tAc$ which fails natural mechanisms as follows: a query $\tq \colon [m] \rightarrow [-1,1]$ issued by $\tAc$ is translated by $\Ac$ to a set of $m$ encryptions $\set{\ct_j}_{j=1}^m$ where each $\ct_j$ is an encryption of $\tq(j)$ under the key $\sk_j$. These encryptions define a new query $q$ that on input $(j, \sk)$, outputs the decryption of $\ct_j$ under the key $\sk$. However, since $\Mc$ is computationally bounded and has only the secret keys that are part of its dataset $\cS$, it can only decrypt the values of $\tq$ on points in $\cS$, yielding that it effectively behaves \emph{naturally}.

Note that the above attack $\Ac$ is \emph{imbalanced} as it injects the secret keys $\sk_1,\ldots,\sk_m$ into $\cD$ and then uses these keys when it forms queries. 
In other words, even though the attacker $\Ac$ is known to the mechanism, $\Ac$ is able to fail $\Mc$ by creating a secret correlation between its random coins and the distribution $\cD$. 

\subsubsection{Balanced Adversary via Identity Based Encryption Scheme}

For proving \cref{thm:intro:IBE}, we replace the private-key encryption scheme with a public-key primitive called \emph{identity-based encryption} (IBE) scheme \citep{Shamir84,Cocks01,BonehF01}.  Such a scheme enables to produce $m$ secret keys $\sk_1,\ldots,\sk_m$ along with a master public key $\mpk$. Encrypting a message to a speific identity $j \in [m]$ only requires $\mpk$, but decrypting a message for identity $j$ must be done using its secret key $\sk_j$.
Using an IBE scheme we can achieve a reduction to \emph{natural} mechanisms via a \emph{balanced} adversary $\Ac = (\Ac_1, \Ac_2)$ as follows: $\Ac_1$ samples keys $\mpk, \sk_1, \ldots,\sk_m$ according to the IBE scheme, and defines $\cD$ to be the uniform distribution over the triplets $\set{(j, \mpk , \sk_j)}_{j=1}^m$. The analyst $\Ac_2$, which does not know the keys, first asks queries of the form $q(j,\mpk,\sk) = \mpk_k$ for every bit $k$ of $\mpk$ in order to reveal it. Then, it follows a strategy as in the previous section, i.e., it simulates an adversary $\tAc$ which foils natural mechanisms by translating each query query 
$\tq \colon [m] \rightarrow [-1,1]$ issued by $\tAc$ by encrypting each $\tq(j)$ for identity $j$ using $\mpk$. Namely, the IBE scheme allowed the analyst to implement the attack of \cite{HU14} and \cite{SU15}, but without having to know the secret keys $\sk_1,\ldots,\sk_m$.

We can implement the IBE scheme using a standard public-key encryption scheme: in the sampling process, we sample $m$ independent pairs of public and secret keys $\set{(\pk_j, \sk_j)}_{j=1}^m$ of the original scheme, and define $\mpk = (\pk_1,\ldots,\pk_m)$. When encrypting a message for identity $j \in [m]$, we could simply encrypt it using $\pk_j$ (part of $\mpk$), which can only be decrypted using $\sk_j$.
The disadvantage of this approach is the large master public key $\mpk$ that it induces. Applying the encryption scheme with security parameter of $\lambda$, the master key $\mpk$ will be of size $\lambda \cdot m$ and not just $\lambda$ as the sizes of the secret keys. This means that implementing our \emph{balanced} adversary with such an encryption scheme would result with a distribution over a large domain $\cX$, which would not rule out the possibility to construct a mechanism for distributions over smaller domains. 
Yet, \cite{DottlingG21} showed that it is possible to construct a fully secure IBE scheme using a small $\mpk$ of size only $O(\lambda \cdot \log m)$ under standard hardness assumptions (e.g., the \emph{Computational Diffie Helman} problem \citep{DiffieH76}\footnote{CDH is hard with respect to a group $\bbG$ of order $p$, if given a random generator $g$ along with $g^a$ and $g^b$, for uniformly random $a, b \in [p]$, as inputs, the probability that a  \ppt algorithm can compute $g^{ab}$ is negligible.} or the hardness of \emph{factoring}).

\subsubsection{Key-Agreement Protocol via Balanced Adversary}\label{sec:intro:KA}

In order to prove \cref{thm:intro:KA}, we first explain what type of adversaries the theorem applies to. 
Recall that in all known attacks (including ours), the adversary $\Ac$ wraps a simpler adversary $\tAc$ that fails \emph{natural} mechanisms. In particular, the wrapper  $\Ac$ has two key properties:
\begin{enumerate}
	\item $\Ac$ knows $\eex{x \sim \cD}{q_{\ell}(x)}$ for the last query $q_{\ell}$ that it asks (becuase it equals to $\frac1{m} \sum_{j=1}^m \tq_{\ell}(j)$, where $\tq_{\ell}$ is the wrapped query which is part of $\Ac$'s view), and\label{prop:intro:view}
	\item If the mechanism attempts to behave accurately in the first $\ell-1$ rounds (e.g., it answers the empirical mean $\frac1{n} \sum_{x \in \cS} q(x)$ for every query $q$), then $\Ac$, as a wrapper of $\tAc$, will be able to 
	ask a last query $q_{\ell}$ that would fail any computationally bounded last-round strategy for the mechanism.\label{prop:intro:last} 
\end{enumerate}


We next show that any computationally bounded \emph{balanced} adversary $\Ac$ that has the above two properties, can be used for constructing a key-agreement protocol. That is, a protocol between two computationally bounded parties $\Pc_1$ and $\Pc_2$ that enable them to agree on a value which cannot be revealed by a computationally bounded adversary who only sees the transcript of the execution. See \cref{protocol:intro:KA}.

\begin{protocol}[Key-Agreement Protocol $(\Pc_1,\Pc_2)$ via a \emph{balanced} adversary $\Ac = (\Ac_1, \Ac_2)$]\label{protocol:intro:KA}
	
	\item Input: $1^n$. Let $\ell=\ell(n)$ and $\cX = \cX(n)$ be the number queries and the domain that is used by the adversary $\Ac$.
	
	\item Operation:~
	\begin{itemize}
		
		\item $\Pc_1$ emulates $\Ac_1$ on input $n$ for obtaining a distribution $\cD$ over $\cX$ (specified by a sampling procedure), and samples $n$ i.i.d.\ samples $\cS$.
		
		\item $\Pc_2$ initializes an emulation of $\Ac_2$ on input $n$.

		\item For $i=1$ to $\ell$:
		\begin{enumerate}
			\item $\Pc_2$ receives the \ith query $q_i$ from the emulated $\Ac_2$ and sends it to $\Pc_1$.
			
			\item $\Pc_1$ computes $y_i = \frac1n \sum_{x \in \cS} q_i(x)$, and sends it to $\Pc_2$.
			
			\item $\Pc_2$ sends $y_i$ as the \ith answer to the emulated $\Ac_2$.
			
		\end{enumerate}
		
		\item $\Pc_1$ and $\Pc_2$ agree on $\eex{x \sim \cD}{q_{\ell}(x)}$.
		
	\end{itemize}
\end{protocol}

The agreement of \cref{protocol:intro:KA} relies on the ability of $\Pc_1$ and $\Pc_2$ to compute $\eex{x \sim \cD}{q_{\ell}(x)}$. Indeed, $\Pc_1$ can accurately estimate it using the access to the sampling procedure, and $\Pc_2$ can compute it based on the view of the analyst $\Ac_2$ (follows by Property~\ref{prop:intro:view}).

To prove the secrecy guarantee of \cref{protocol:intro:KA}, assume towards a contradiction that there exists a computationally bounded adversary $\Gc$ that given the transcript of the execution, can reveal $\eex{x \sim \cD}{q_{\ell}(x)}$. Now consider the following mechanism for the ADA game: In the first $\ell-1$ queries, answer the empirical mean, but in the last query, apply $\Gc$ on the transcript and answer its output. By the assumption on $\Gc$, the mechanism will be able to accurately answer the last query, in contradiction to Property~\ref{prop:intro:last}.

We note that Property~\ref{prop:intro:view} can be relaxed by only requiring that $\Ac$ is able to provide a ``good enough" estimation of $\eex{x\sim \cD}{q_{\ell}(x)}$. Namely, as long as the estimation provided in Property~\ref{prop:intro:view} is better than the estimation that an adversary can obtain in Property~\ref{prop:intro:last} (we prove that an $n^{\Omega(1)}$ multiplicative gap suffices), this would imply that \cref{protocol:intro:KA} is a \emph{weak} key-agreement protocol, which can be amplified to a fully secure one using standard techniques.

We also note that by requiring in  \cref{game:intro:balanced-ADA} that $\Ac_1$ samples from $\cD$ according to the sampling procedure, we implicitly assume here that sampling from $\cD$ can be done efficiently (because $\Ac_1$ is assumed to be computationally bounded). Our reduction to key-agreement relies on this property, since if sampling from $\cD$ could not be done efficiently, then $\Pc_1$ would not have been a computationally bounded algorithm.

\subsection{Perspective of Public Key Cryptography}

Over the years, cryptographic research has proposed solutions to many different cryptographic tasks under a growing number of (unproven) computational hardness assumptions.
To some extent, this state of affairs is unavoidable, since the security of almost any cryptographic primitive implies the existence of one-way functions \citep{ImpagliazzoL89} (which in particular implies that $P \neq NP$). Yet, all various assumptions can essentially be divided into two main types: \emph{private key} cryptography and \emph{public key} cryptography \citep{Impagliazzo95}. 
The former type is better understood: A series of works have shown that the unstructured form of hardness guaranteed by one-way functions is sufficient to construct many complex and structured primitives such as pseudorandom generators \citep{HastadImLeLu99}, pseudorandom functions \citep{GoldreichGoMi86} and permutations \citep{LubyR88}, commitment schemes \citep{Naor1991,HaitnerNgOnReVa09}, universal one-way hash functions \citep{Rompel90}, zero-knowledge proofs \citep{GoldreichMiWi87}, and more. 
However, reductions are much less common outside the one-way functions regime, particularly when constructing public-key primitives. In the famous work of \citet{ImpagliazzoRu89} they gave the first evidence that \emph{public key} cryptography assumptions are strictly stronger than one-way functions, by showing that key-agreement, 
which enables two parties to exchange secret messages over open channels, cannot be constructed from one-way functions in a black-box way. 

Our work shows that a \emph{balanced} adversary for the ADA game that has the structure of all known attacks, is a primitive that belongs to the public-key cryptography type. In particular, if public-key cryptography does not exist, it could be possible to construct a computationally bounded mechanism that can handle more than $\Theta(n^2)$ adaptive queries of a \emph{balanced} adversary (i.e., we currently do not have a negative result that rules out this possibility).

\subsection{Other Related Work}

\citet{NSSSU18} presented a variant of the lower bound of \cite{SU15} that aims to reduce the number of queries used by the attacker. However, their resulting lower bound only holds for a certain family of mechanisms, and it does not rule out all computationally efficient mechanisms.

\citet{DinurSWZ23} revisited and generalized the lower bounds of \cite{HU14} and \cite{SU15} by showing that they are a consequence of a space bottleneck rather than a sampling bottleneck. Yet, as in the works by Hardt, Steinke, and Ullman, the attack by Dinur et al.\  relies on the ability to choose the underlying distribution $\cD$ and inject secret trapdoors in it, and hence it utilizes an \emph{imbalanced} adversary. 

Recently, lower bounds constructions for the ADA problem were used as a tool for constructing (conditional) lower bounds for other problems, such as the space complexity of {\em adaptive streaming algorithms} \citep{KaplanMNS21} and the time complexity of {\em dynamic algorithms} \cite{BeimelKMNSS22}. Our lower bound for the ADA problem is qualitatively stronger than previous lower bounds (as the adversary we construct has less power). Thus, our lower bound could potentially yield new connections and constructions in additional settings.

\subsection{Conclusions and Open Problems}

In this work we present the balanced adversarial model for the ADA problem, and show that the existence of a balanced adversary that has the structure of all previously known attacks is equivalent to the existence of public-key cryptography. 
Yet, we do not know what is the truth outside of the public-key cryptography world. Can we present a different type of efficient attack that is based on weaker hardness assumptions (like one-way functions)? Or is it possible to construct an efficient mechanism that answer more than $\Theta(n^2)$ adaptive queries assuming that public-key cryptography does not exist? We also leave open similar questions regarding the information theoretic case. We currently do not know whether it is possible to construct an unbounded mechanism that answers exponential number of queries for any distribution $\cD$ (regardless of its domain size).

In a broader perspective, lower bounds such as ours show that no general solution exists for a problem. They often use unnatural inputs or distributions and rely on cryptographic assumptions. They are important as guidance for how to proceed with a problem, e.g., search for mechanisms that would succeed if the underlying distribution is from a "nice" family of distributions.

\section{Preliminaries}

\subsection{Notations}
We use calligraphic letters to denote sets and distributions, uppercase for random variables, and lowercase for values and functions. 
For $n \in \bbN$, let $[n] = \set{1,2,\ldots,n}$.
Let $\negl(n)$ stand for a negligible function in $n$, i.e., a function $\nu(n)$ such that for every constant $c > 0$ and large enough $n$ it holds that $\nu(n) < n^{-c}$.
For $n \in \bbN$ we denote by $1^n$ the $n$-size string $1\ldots1$ ($n$ times). 
Let \ppt stand for probabilistic polynomial time. 
We say that a pair of algorithms $\Ac = (\Ac_1,\Ac_2)$ is \ppt if both $\Ac_1$ and $\Ac_2$ are \ppt algorithms.



\subsection{Distributions and Random Variables}\label{sec:prelim:dist}

Given a distribution $\cD$, we write $x \sim \cD$, meaning that $x$ is sampled according to $\cD$.
For a multiset $\cS$, we denote by $\cU_{\cS}$ the uniform distribution over $\cS$, and let $x \la \cS$ denote that $x \sim \cU_{\cS}$.
For a distribution $\cD$ and a value $n \in \N$, we denote by $\cD^n$ the distribution of $n$ i.i.d.\ samples from $\cD$.
For a distribution $\cD$ over $\cX$ and a query $q\colon \cX \rightarrow [-1,1]$, we abuse notation and denote $q(\cD) \eqdef \eex{x \sim \cD}{q(x)}$, and similarly for $\cS = (x_1,\ldots,x_n) \in \cX^*$ we abuse notation and denote $q(\cS) \eqdef \eex{x \leftarrow \cS}{q(x)} = \frac1{n} \sum_{i=1}^n x_i$. 

\begin{fact}[Hoeffding's inequality]\label{fact:Hoef}
	Let $X_1,\ldots,X_n$ be i.i.d.\ random variables over $[-1,1]$ with expectation $\mu$. Then
	\begin{align*}
		\pr{\size{\frac1n\sum_{i=1}^n X_i - \mu} \geq \alpha} \leq 2\cdot e^{-\alpha^2 n/2}
	\end{align*}
\end{fact}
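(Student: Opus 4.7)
The plan is to prove Hoeffding's inequality by the standard Chernoff--Cram\'er moment generating function method, which reduces a tail bound to an MGF bound for a single bounded random variable.

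First I would center the variables by setting $Y_i = X_i - \mu$, so that $Y_i \in [-1-\mu, 1-\mu] \subseteq [-2, 2]$ with $\Ex[Y_i] = 0$, and I would focus on the one-sided tail $\pr{\frac{1}{n} \sum_i Y_i \geq \alpha}$. Applying Markov's inequality to the random variable $\exp\paren{t \sum_i Y_i}$ for a parameter $t > 0$ to be chosen later gives
\begin{align*}
\pr{\sum_{i=1}^n Y_i \geq \alpha n} \;\leq\; e^{-t \alpha n} \cdot \ex{\exp\paren{t \sum_{i=1}^n Y_i}} \;=\; e^{-t \alpha n} \prod_{i=1}^n \ex{e^{t Y_i}},
\end{align*}
where the factorization uses independence of the $X_i$'s (and hence of the $Y_i$'s).

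The main step is Hoeffding's lemma: for any mean-zero random variable $Y$ supported in an interval $[a, b]$, one has $\ex{e^{tY}} \leq \exp\paren{t^2 (b-a)^2 / 8}$. I would prove this by writing $Y$ as a convex combination $\tfrac{b-Y}{b-a} a + \tfrac{Y-a}{b-a} b$ and applying convexity of $x \mapsto e^{tx}$ to get $e^{tY} \leq \tfrac{b-Y}{b-a} e^{ta} + \tfrac{Y-a}{b-a} e^{tb}$, then taking expectation and analyzing the resulting function of $t$ via its second derivative (which is bounded by $(b-a)^2/4$). In our setting $b - a \leq 2$, so $\ex{e^{t Y_i}} \leq e^{t^2 / 2}$.

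Combining these gives $\pr{\sum_i Y_i \geq \alpha n} \leq \exp\paren{-t\alpha n + n t^2 / 2}$, and optimizing by choosing $t = \alpha$ yields the bound $e^{-\alpha^2 n / 2}$. A symmetric argument applied to $-Y_i$ handles the lower tail, and a union bound over the two events produces the factor of $2$ and completes the inequality. The only nontrivial step is Hoeffding's lemma itself; the rest is a routine Chernoff calculation and an optimization over $t$.
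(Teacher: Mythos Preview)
Your proof is correct and follows the standard Chernoff--Cram\'er route to Hoeffding's inequality; note in particular that the interval $[-1-\mu,1-\mu]$ always has width exactly $2$, so Hoeffding's lemma gives $\ex{e^{tY_i}}\le e^{t^2/2}$ with equality in the exponent constant, and the optimization $t=\alpha$ lands precisely on $e^{-\alpha^2 n/2}$ as stated.

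The paper itself does not prove this statement: it is recorded as a \emph{Fact} in the preliminaries and invoked as a black box (for the empirical-mean concentration in \cref{lemma:KA} and the majority step in \cref{lem:GL}). So there is no paper proof to compare against; your write-up simply supplies the standard textbook argument that the authors chose to omit.
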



\subsection{Cryptographic Primitives}

\subsubsection{Key Agreement Protocols}\label{sec:prelim:KA}

The most basic public-key cryptographic primitive is a (1-bit) \emph{key agreement} protocol, defined below.

\begin{definition}[key-agreement protocol]\label{def:strong-KA}
	
Let $\pi$ be a two party protocol between two \emph{interactive} \ppt algorithms $\Pc_1$ and $\Pc_2$, each outputs $1$-bit. Let $\pi(1^{n})$ denote a random execution of the protocol on joint input $1^{n}$ (the security parameter), and let $O^1_{n}, O^2_{n}$ and $T_{n}$ denote the random variables of $\Pc_1$'s output, $\Pc_2$'s output, and the transcript (respectively) in this execution. We say that $\pi$ is an $(\alpha,\beta)$-\emph{key-agreement} protocol if the following holds for any \ppt (\ie ``eavesdropper'') $\Ac$ and every $n \in \bbN$:

\begin{description}
	\item[Agreement:] $\pr{O^1_{n} = O^2_{n}} \geq \alpha(n)$, and 
			
	\item[Secrecy:] $\pr{\Ac(T_{n}) = O^1_{n}} \leq \beta(n)$.
\end{description}

We say that $\pi$ is a \emph{fully-secure key-agreement} protocol if it is an $(1-\negl(n), 1/2 + \negl(n))$-\emph{key-agreement} protocol.

\end{definition}

We use the following weaker type of agreement.

\begin{definition}[Approximate agreement protocol]\label{def:weak-A}
	Let $\pi$ be a two party protocol between two \emph{interactive} \ppt algorithms $\Pc_1$ and $\Pc_2$, each outputs a value in $[-1,1]$, and denote by $O_{n}^1, O_{n}^2 \in [-1,1]$ and $T_{n}$ the random variables of the outputs  of $\Pc_1$, $\Pc_2$ and transcript (respectively) in a random the execution $\pi(1^{n})$. We say that $\pi$ is an $(\alpha,\beta)$-\emph{approximate agreement} protocol if the following holds for any \ppt $\Ac$ and $n \in \bbN$: 
	
	\begin{description}
		\item[Approximate Agreement:] $\pr{\size{O^1_{n} - O^2_{n}} \leq \alpha(n)} \geq 1-\negl(n)$, and 
		
		\item[Secrecy:] $\pr{\size{\Ac(T_{n}) - O^1_{n}} \leq  \beta(n)} \leq 1 - n^{-\Omega(1)}$.
	\end{description}
	
\end{definition}

Namely, when $\alpha(n) < \beta(n)$, the parties in an \emph{approximate agreement} protocol do not agree on the same value, but are able to output values that are closer to each other than any prediction of a \ppt ``eavesdropper" adversary. We show that such approximate agreement suffices for constructing a fully-secure key-agreement.

\begin{theorem}\label{fact:prelim:KA-reduction}
	Let $\alpha, \beta \colon \bbN \rightarrow [0,1]$ be efficiently computable functions such that $\alpha(n)/ \beta(n) \leq n^{-\Omega(1)}$ and $\alpha(n)\cdot \beta(n) \geq 2^{-n}$ for large enough $n$.
    If there exists an $(\alpha, \beta)$-approximate-agreement protocol, then there exists a fully-secure key-agreement protocol.
\end{theorem}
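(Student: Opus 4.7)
The plan is to reduce the $(\alpha,\beta)$-approximate-agreement protocol $\pi$ to a \emph{weak bit key-agreement} protocol, in which the two parties agree on a single bit with probability exceeding any \ppt eavesdropper's prediction probability by an inverse-polynomial margin, and then apply the standard amplification of weak key-agreement to fully-secure key-agreement available in the literature.

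The bit protocol $\pi'$ is constructed as follows: the parties run $\pi$ to obtain $O^1, O^2 \in [-1,1]$, and then $\Pc_2$ samples a uniformly random public threshold $r \in [-1,1]$ and sends it to $\Pc_1$. Each party outputs the bit $b_i = \mathds{1}[O^i > r]$. Agreement is immediate from the approximate-agreement guarantee of $\pi$: conditioned on $\size{O^1 - O^2} \le \alpha$ (which holds with probability $1 - \negl(n)$), a uniformly random $r$ separates the two values with probability at most $\alpha/2$, so $\pr{b_1 = b_2} \ge 1 - \alpha/2 - \negl(n)$.

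For secrecy, given any \ppt eavesdropper $\Ac'$ against $\pi'$, I would build a \ppt estimator $\Ac$ for the secrecy game of $\pi$ as follows: on input a transcript $T$ of $\pi$, sample polynomially many fresh random thresholds $r_1,\ldots,r_k$, query $\Ac'(T,r_j)$ for each, and output a ``transition point'' $\tilde{O}$ (e.g., the weighted median of the $r_j$'s) at which $\Ac'$'s predicted bit flips from $1$ to $0$. A quantitative argument will show that if $\Ac'$'s bit-guessing error probability is $\eta$, then $\size{\tilde{O} - O^1} \le \beta$ except with probability $O(\eta/\beta) + \negl(n)$; combining with the secrecy of $\pi$ then yields $\eta \ge \beta \cdot n^{-\Omega(1)}$, so the gap of $\pi'$ is at least $\beta \cdot n^{-\Omega(1)} - \alpha/2$, which is positive by the hypothesis $\alpha/\beta \le n^{-\Omega(1)}$.

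I expect the main obstacle to be tightly handling the quantitative reductions and the downstream amplification. The gap of $\pi'$ produced by the argument above might only be guaranteed to be at least $2^{-n}/\poly(n)$ (via the hypothesis $\alpha \cdot \beta \ge 2^{-n}$), so before applying the standard amplification to fully-secure key-agreement (which requires an inverse-polynomial gap), I would first boost the gap to $1/\poly(n)$ via a polynomial number of parallel repetitions of $\pi'$ combined with majority voting on the agreed bits. Additional care is required to match the quantifier-ordering of the secrecy definition, since the polynomial $n^{-\Omega(1)}$ in the secrecy guarantee of $\pi$ depends on the specific adversary; I would argue that the bit-to-estimator reduction remains meaningful per-adversary, and that the outer amplification only requires such a per-adversary inverse-polynomial gap.
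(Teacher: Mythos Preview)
The single-threshold reduction is too lossy: converting a bit predictor $\Ac'$ with error $\eta$ into an estimator $\Ac$ that lands within $\beta$ of $O^1$ costs a factor of $\beta$ (a predictor with error $\eta$ gives an estimator succeeding with probability $\ge 1-O(\eta/\beta)$), so the secrecy of $\pi$ only forces $\eta \ge \beta\cdot n^{-c}$ where $c$ is the \emph{adversary-dependent} exponent in the secrecy bound for the particular $\Ac$ you build. Two things then go wrong. First, your claim that the gap $\beta\cdot n^{-\Omega(1)} - \alpha/2$ is positive conflates two unrelated $\Omega(1)$ constants --- the exponent in the hypothesis $\alpha/\beta\le n^{-\Omega(1)}$ is fixed, while the secrecy exponent $c$ can be arbitrarily large --- so nothing rules out an $\Ac'$ whose success already exceeds the agreement probability of $\pi'$. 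Second, the hypotheses permit $\beta$ itself to be sub-inverse-polynomial (e.g.\ $\alpha=2^{-n/3}/n$ and $\beta=n\cdot 2^{-n/3}$ satisfy both conditions); in that regime your estimator is not even \ppt (locating the transition point to accuracy $O(\beta)$ needs $\Omega(1/\beta^2)$ threshold samples), and even granting it, the advantage you extract is at best $\beta/\poly(n)$, hence negligible. Parallel repetition with majority cannot repair a negligible gap: when both the agreement probability and the adversary's success are $1-\negl(n)$, an adversary that simply takes the majority of its per-copy guesses succeeds with probability at least $1-k\cdot\negl(n)$ after $k$ copies, so after $k=\poly(n)$ repetitions the gap is still negligible.

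The paper avoids the $\beta$-loss by discretizing $[-1,1]$ into buckets of width $\gamma=\sqrt{\alpha\beta}$ (after a uniformly random shift in $[0,\gamma]$) and having each party output the Goldreich--Levin inner-product bit $\iprod{s_i,r}\bmod 2$ of the $m\approx\log(2/\gamma)$-bit bucket label $s_i$. The random shift makes the two labels coincide with probability $\ge 1-\alpha/\gamma=1-\sqrt{\alpha/\beta}=1-n^{-\Omega(1)}$. For secrecy, any eavesdropper predicting the hardcore bit with probability $\ge 0.9$ can, via the easy ($>3/4$) Goldreich--Levin decoder, recover the full bucket label and hence $O^1$ to within $2\gamma\le\beta$ with probability $1-\negl(n)$, contradicting the secrecy of $\pi$ outright (since $1-\negl(n)>1-n^{-\Omega(1)}$ for large $n$). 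This directly yields a $(1-n^{-\Omega(1)},\,0.9)$-key agreement, with a \emph{constant} secrecy bound independent of how small $\beta$ is, which is exactly the input Holenstein's amplification consumes. The hypothesis $\alpha\beta\ge 2^{-n}$ is used precisely here, to guarantee $m\le n$ so that the Goldreich--Levin decoding runs in polynomial time.
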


A close variant of \cref{fact:prelim:KA-reduction} is implicitly proved in \cite{HMST22} (with  better parameters, but in a more complicated setting). For completeness, we give a full proof of \cref{fact:prelim:KA-reduction} in \cref{sec:KA-Missing-Proof}.


\subsubsection{Identity-Based Encryption}\label{sec:IBE}

An Identity-Based Encryption (IBE) scheme \cite{Shamir84,Cocks01,BonehF01} consists of four \ppt algorithms $(\Setup, \KeyGen, \Encrypt, \Decrypt)$ defined as follows: 

\begin{itemize}
	\item $\Setup(1^{\lambda})$: given the security parameter $\lambda$, it outputs a master public key $\mpk$ and a master secret key $\msk$.
	
	\item $\KeyGen(\msk, \id)$: given the master secret key $\msk$ and an identity $\id \in [n]$, it outputs a decryption key $\skid$.
	
	\item $\Encrypt(\mpk, \id, \m)$: given the master public key $\mpk$, and identity $\id \in [n]$ and a message $\m$, it outputs a ciphertext $\ct$. 
	
	\item $\Decrypt(\skid, \ct)$: given a secret key $\skid$ for identity $\id$ and a ciphertext $\ct$, it outputs a string $\m$.
\end{itemize}

The following are the properties of such an encryption scheme:

\begin{itemize}
	\item \textbf{Completeness:} For all security parameter $\lambda$, identity $\id \in [n]$ and a message $\m$, with probability $1$ over $(\mpk, \msk) \sim \Setup(1^{\lambda})$ and $\skid \sim \KeyGen(\msk, \id)$ it holds that
	\begin{align*}
		\Decrypt(\skid, \Encrypt(\mpk, \id, \m)) = \m
	\end{align*}
	
	\item \textbf{Security:} For any \ppt adversary $\Ac = (\Ac_1, \Ac_2)$ it holds that:
	\begin{align*}
		\Pr[IND^{IBE}_{\Ac}(1^{\lambda}) = 1] \leq 1/2 + \negl(\lambda)
	\end{align*}
	where  $IND^{IBE}_{\Ac}$ is shown in \cref{exper:IBE}.\footnote{The IBE security experiment is usually described as \cref{exper:IBE} with $k=1$ (i.e., encrypting a single message). Yet, it can be extended to any sequence of messages using a simple reduction.}
	\begin{experiment}[$IND^{IBE}_{\Ac}(1^{\lambda})$]\label{exper:IBE}
		~
		\begin{enumerate}
			\item $(\mpk, \msk) \sim \Setup(1^{\lambda})$.
			
			\item $(\id^*, (\m_1^0,\ldots,\m_k^0), (\m_1^1,\ldots,\m_k^1), \st) \sim \Ac_1^{\KeyGen(\msk, \cdot)}(\mpk)$ where $\size{\m_i^0} = \size{\m_i^1}$ for every $i \in [k]$ and for each query $\id$ by $\Ac_1$ to $\KeyGen(\msk,\cdot)$ we have that $\id \neq \id^*$.
			
			\item Sample $b \la \zo$.
			
			\item Sample $\ct^*_i \sim \Encrypt(\mpk, \id^*, \m_i^b)$ for every $i \in [k]$.
			
			\item $b' \sim \Ac_2^{\KeyGen(\msk,\cdot)}(\mpk, (\ct^*_1,\ldots \ct^*_k), \st)$ where for each query $\id$ by $\Ac_2$ to $\KeyGen(\msk,\cdot)$ we have that $\id \neq \id^*$. 
			
			\item Output $1$ if $b=b'$ and $0$ otherwise.
		\end{enumerate}
	\end{experiment}
	Namely, the adversary chooses two sequences of messages $(\m_1^0,\ldots,\m_k^0)$ and $(\m_1^1,\ldots,\m_k^1)$, and gets encryptions of either the first sequence or the second one, where the encryptions made for identity $\id^*$ that the adversary does not hold its key (not allowed to query $\KeyGen$ on input $\id^*$). The security requirement means that she cannot distinguish between the two cases (except with negligible probability).

\end{itemize}

\citet{Shamir84} was the first to consider the problem of constructing an IBE scheme that can support many identities using small keys.
The first IBE schemes were realized by \citet{BonehF01,Cocks01}, but their security analyses were based on non-standard cryptographic assumptions: the quadratic residuocity assumption \cite{Cocks01} and assumptions on groups with billinear map \cite{BonehF01}. More recently, \citet{DottlingG21,BlazyK22} have managed to construct an IBE scheme based on the standard Computational Diffie-Hellman (CDH) hardness assumption. Below we summarize the construction properties of \cite{DottlingG21}.

\begin{theorem}[\cite{DottlingG21}]\label{thm:IBE}
	Assume that the Computational Diffie-Hellman (CDH) Problem is hard. Then there exists an IBE scheme $\cE = (\Setup, \KeyGen, \Encrypt, \Decrypt)$ for $n$ identities such that given a security parameter $\lambda$, the master keys and each decryption key are of size $O(\lambda \cdot \log n)$.\footnote{The construction can also be based on the hardness of \emph{factoring}.}
\end{theorem}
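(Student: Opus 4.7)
The plan is to follow the tree-based blueprint of \cite{DottlingG21}, which reduces IBE with short keys to a CDH-based primitive that supports ``hash encryption'' (i.e., the ability to encrypt a message so that it can only be decrypted given a valid opening to a designated bit of a short hash). The scheme will be parametrized by a binary tree of depth $d = \ceil{\log n}$ whose leaves are indexed by identities $\id \in [n]$, and the master public key will consist of just the (short) hash root together with public parameters, yielding $|\mpk| = O(\lambda)$.

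First I would construct from CDH a \emph{chameleon/hash encryption} scheme: a hash $\HH \colon \zo^{2\lambda} \to \zo^{\lambda}$ together with an encryption procedure $\HH\mathsf{Enc}(h, i, b, \m)$ such that given a preimage $x$ of $h$ with $x_i = b$ one can recover $\m$, but without such a preimage the ciphertext is semantically secure. The CDH-based realization essentially follows the Cramer--Shoup/Gentry style of pairing-free trapdoor hashes, using the algebraic structure of $g^a, g^b, g^{ab}$ to pad/mask the message in a way that can only be unmasked by a correct opening. I would then verify completeness by direct algebraic computation, and prove security by a reduction that embeds a CDH instance into the masking value at the relevant hash bit.

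Next I would bootstrap this into IBE through an $O(\log n)$-level chain of garbled circuits. The master secret holds the full labeled tree; to produce $\skid$ the authority releases the sibling hashes and openings along the root-to-leaf path for $\id$ (size $O(\lambda \log n)$). To encrypt $\m$ for $\id$, one garbles $d$ circuits $C_1,\ldots,C_d$ where $C_j$ takes as input the $j$-th internal hash on the path, together with an opening, checks consistency, and outputs the input labels of $C_{j+1}$ encoded via $\HH\mathsf{Enc}$ under the appropriate child hash and bit of $\id$; the final circuit $C_d$ outputs $\m$. The ciphertext is $C_1$ together with the hash-encrypted labels under the root; decryption with $\skid$ walks down the tree, at each level using the sibling openings to unlock the next garbled circuit's labels.

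The main obstacle will be the security proof, which requires a hybrid argument of depth $\Theta(\log n)$: I would sequentially replace, level by level along the challenge path, each garbled circuit by its simulator (justified by garbling security) and each $\HH\mathsf{Enc}$ of the ``wrong'' label by a dummy (justified by hash-encryption security, which itself reduces to CDH). Care is needed so that at each hybrid the only information the reduction needs in order to simulate the KeyGen oracle for $\id \neq \id^*$ is data that diverges from the challenge path, which is available because the two paths share only a strict prefix; the bookkeeping of which labels remain ``live'' at each hybrid is the delicate part. Completeness and the parameter bounds $|\mpk|,|\skid| = O(\lambda \log n)$ then follow by inspection of the construction.
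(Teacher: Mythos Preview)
The paper does not prove this theorem at all: it is stated as a black-box citation of \cite{DottlingG21} and is used only as an off-the-shelf ingredient in \cref{sec:lower_bound}. There is therefore no ``paper's own proof'' to compare your proposal against; the authors simply invoke the result and move on.

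That said, your sketch is a fair high-level outline of the D\"ottling--Garg construction (chameleon/hash encryption from CDH, a depth-$\ceil{\log n}$ tree, chained garbled circuits, and a level-by-level hybrid for security). One small inconsistency: early on you write $|\mpk| = O(\lambda)$, but the bound you should be establishing, and the one you state at the end, is $O(\lambda \log n)$, matching the theorem. Also, in the actual construction the ciphertext size is polynomial in $\lambda$ and $\log n$ (because of the chained garbled circuits), but the theorem here only concerns key sizes, so that is not an issue. For the purposes of this paper none of this detail is required; a one-line citation suffices.
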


\subsection{Balanced Adaptive Data Analysis}


Adaptive data analysis is modeled as a game between a \emph{mechanism} $\Mc$ and an \emph{analyst} $\Ac$. The mechanism gets as input $n$ i.i.d.\ samples $x_1,\ldots,x_n $ from an (unknown) distribution $\cD$ over a domain $\cX$, and its goal is to answer statistical queries about $\cD$, produced by the analyst. Namely, when $\Ac$ sends a statistical query $q \colon \cX \rightarrow [-1,1]$, $\Mc$ is required to return an answer $y \in [-1,1]$ that is close to $q(\cD) =  \eex{x \sim \cD}{q(x)}$.

\remove{
The challenge is to construct a \emph{mechanism} $\Mc$ that insures accuracy, even when $\Ac$ chooses a sequence of adaptive queries $q_1,\ldots,q_{\ell}$ (i.e., each query $q_i$ is chosen based on the previous queries and answers $q_1,y_1,\ldots,q_{i-1},y_{i-1}$).
Because the queries are being chosen adaptively, the interaction might quickly lead to overfitting, i.e., result in answers that are accurate w.r.t.\ the empirical sample  but not w.r.t.\ the underlying distribution $\cD$. This fundamental problem, which we refer to as the ADA problem, was introduced by \citet{DFHPRR14} who also showed that differential privacy can be used as a countermeasure against overfitting. 
}

%
%
%
%
%
%

In this work we investigate a {\em balanced} setting where the adversarial analyst does not have an informational advantage over the mechanism. 
Namely, the analyst has no knowledge about the underline distribution which the mechanism does not have.
We model this situation by separating between the analyst from the underline distribution, as follows:

The mechanism plays a game with a \emph{balanced} adversary that consists of two (isolated) algorithms: a \emph{sampler} $\Ac_1$, which chooses a distribution $\cD$ over a domain $\cX$ and provides $n$ i.i.d.\ samples to $\Mc$, and an \emph{analyst} $\Ac_2$, which asks the adaptive queries about the distribution (see \cref{game:ADA}).
The public inputs for the ADA game are the number of samples $n$, the number of queries $\ell$ and the domain $\cX$.
Since this work mainly deals with computationally bounded algorithms that we would like to model as \ppt algorithms, we provide $n$ and $\ell$ in unary representation. We also assume for simplicity that $\cX$ is finite, which allows to represent each element as a binary vector of dimension $ \ceil{\log \size{\cX}}$, and we provide the dimension in unary representation as well. 



\begin{game}[$\ADA{n,\ell, \cX}{}{\Mc,\Ac = (\Ac_1,\Ac_2)}$, redefinition of \cref{game:intro:balanced-ADA}]\label{game:ADA}	
	
	\item Public inputs: Number of samples $1^n$, number of queries $1^\ell$, and a domain $\cX$ (represented as $1^{\ceil{\log \size{\cX}}}$).
	
	\item Operation:~
	\begin{enumerate}
		
		\item $\Ac_1$ chooses a distribution $\cD$ over $\cX$ (specified by a sampling algorithm) and sends $\cS = (x_1,\ldots,x_n) \sim \cD^n$ to $\Mc$ (i.e., applies the sampling algorithm $n$ times).\label{step:sampling} 
		
		\item For $i=1,\ldots,\ell$:\label{loop}
		\begin{enumerate}
			\item $\Ac_2$ sends a query $q_i \colon \cX \mapsto [-1,1]$ to $\Mc$.\label{loop:A}
			
			\item $\Mc$ sends an answer $y_i \in [-1,1]$ to $\Ac_2$.\label{loop:M}
			
			(As $\Ac_2$ and $\Mc$ are stateful, $q_i$ and $y_i$ may depend on the previous messages.)
		\end{enumerate}
	
		\item The outcome is one if $\exists i \in [\ell]$ s.t. $\size{y_i - q_i(\cD)} > 1/10$, and zero otherwise.
		
	\end{enumerate}
\end{game}

All  previous negative results (\cite{HU14,SU15,DinurSWZ23}) where achieved by reduction to a restricted family of mechanisms, called \emph{natural} mechanisms (\cref{def:intro:natural}). These are algorithms that can only evaluate the query on the sample points they are given.

For \emph{natural} mechanisms (even unbounded ones), the following was proven.

\begin{theorem}[\cite{HU14,SU15}]\label{thm:natural_attack}
	There exists a pair of \ppt algorithms $\tAc = (\tAc_1, \tAc_2)$ such that for every \emph{natural} mechanism $\tMc$ and every large enough $n$ and $\ell = \Theta(n^2)$ it holds that
	\begin{align}\label{eq:natural_attack}
		\pr{\ADA{n,\ell, \cX = [2000n]}{}{\tMc,\tAc} = 1} > 3/4.
	\end{align}
	In particular, $\tAc_1$ always chooses the uniform distribution over $[2000n]$, and $\tAc_2$ uses only queries over the range $\set{-1,0,1}$.
\end{theorem}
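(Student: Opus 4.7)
My plan is to reconstruct the interactive fingerprinting code (IFPC) attack of \citet{HU14,SU15}. The sampler $\tAc_1$ simply outputs the uniform distribution $\cD$ over $[m]$ with $m = 2000n$ and draws $n$ i.i.d.\ samples from it; since the distribution is public, the substance of the construction is concentrated in the analyst $\tAc_2$, whose adaptive queries will take values in $\set{-1, 0, 1\}$.

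The key combinatorial object is an IFPC: a randomized, adaptive query scheduler paired with a tracing algorithm, whose guarantee is as follows. For any natural mechanism $\tMc$ and any $\cS \in [m]^n$, if the IFPC is run for $\ell = \tilde{\Theta}(n^2)$ rounds and in every round $\tMc$'s response $y_i$ satisfies $\size{y_i - q_i(\cD)} \leq 1/10$, then with probability $\geq 3/4$ the tracer outputs an $x^* \in \cS$ (completeness), and it almost never outputs an $x^* \notin \cS$ (frame-proofness). The construction runs $\Theta(n)$ Tardos-style ``epochs'' of $\tilde{O}(n)$ queries each, where each epoch peels off a new member of $\cS$ conditioned on those already identified; the real-valued answers are handled via randomized rounding to a binary bit expected by the Tardos code, while the queries themselves are the $\set{-1,0,1\}$-valued codewords (the ``$0$'' entries marking positions where the code is indifferent). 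I would invoke this construction rather than rebuild it.

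Given the IFPC, $\tAc_2$ simply runs its query scheduler. The proof that this fails $\tMc$ splits into two cases. Either $\tMc$ is already inaccurate on some IFPC query, in which case the ADA game outcome is $1$ immediately, or $\tMc$ is accurate in all of them, in which case by completeness the tracer identifies a subset $T \subseteq \cS$ of size $\Omega(n)$, and $\tAc_2$ appends a final query $q(j) = \1_{\ens{j \in T}}$ (or a balanced $\pm 1$ variant). This query has $q(\cD) = \size{T}/m = \Theta(1/n)$ while its empirical value on $\cS$ equals $\size{T}/n = \Theta(1)$. A natural mechanism's answer on this query depends only on the bit vector $(q(x_i))_{i=1}^n$, and the same rounding/indistinguishability argument that powers the IFPC shows that $\tMc$ cannot track $q(\cD)$ to within $1/10$ both on the IFPC queries and on the attack query, forcing failure with high probability.

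The main obstacle is the IFPC construction itself with exactly the claimed parameters: total length $\Theta(n^2)$, alphabet $\set{-1,0,1\}$, and both completeness and frame-proofness against real-valued responses with accuracy $1/10$. The non-trivial ingredients are (a) randomized rounding to convert real answers to bits with only a constant-factor loss in the accuracy parameter, (b) composing $\Theta(n)$ Tardos epochs while keeping the total number of rounds at $\Theta(n^2)$ and the tracing failure probability under control via a union bound, and (c) verifying that the frame-proof property survives both randomization and the restriction to a uniformly random $\cS$. These are exactly what \citet{SU15} establish, so I would transport their analysis directly and verify only that the choice $m = 2000n$ suffices for the concrete soundness constant $3/4$.
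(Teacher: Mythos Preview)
The paper does not prove this theorem; it is quoted as a result of \cite{HU14,SU15} and is followed only by a one-paragraph informal description (interactive fingerprinting codes reveal most of $\cS$, after which a final query that vanishes on $\cS$ but varies outside it defeats any natural mechanism). Your sketch is a reasonable reconstruction of that cited argument and is aligned with the paper's informal summary, so there is no ``paper's own proof'' to diverge from.

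Two small issues in your sketch are worth flagging. First, an arithmetic slip: if $\size{T} = \Omega(n)$ and $m = 2000n$, then $q(\cD) = \size{T}/m = \Theta(1)$ (a small constant), not $\Theta(1/n)$; the gap with the empirical value $\size{T}/n$ is still a constant exceeding $1/10$, so the conclusion survives. Second, your final step (``the same rounding/indistinguishability argument that powers the IFPC shows that $\tMc$ cannot track $q(\cD)$'') is vague. The clean version, matching the paper's description, is to make the attack query \emph{constant} on the identified set (hence on $\cS$), so the natural mechanism sees a fixed vector $(q(x_1),\ldots,q(x_n))$ that carries no information about the values outside $\cS$; the true mean can then be placed far from whatever the mechanism outputs. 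Your indicator query $\1_{\ens{j\in T}}$ does achieve this when $T\supseteq\cS$, but you should say explicitly why the mechanism cannot simply output the correct small constant: it cannot distinguish this query from one that is identically $1$ (or from any other query agreeing on $\cS$), and answering all such queries with the same value forces an error on at least one.
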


The adversary $\tAc$ from \cref{thm:natural_attack} uses queries that are based on random \emph{interactive fingerprinting code} \cite{SU15} which enables to reconstruct most of the $n$ samples $\cS$ given $\Theta(n^2)$ accurate answers that are only a function of the samples (as a \emph{natural} mechanism must behave). Once the samples are revealed to the analyst, it then prepares a last query that cannot be answered accurately by a \emph{natural} mechanism (e.g., a query $q$ with $q(x) = 0$ for $x \in \cS$, but with different values for elements $x \in \cX \setminus \cS$). In particular, this holds for the mechanism $\tMc$ which given a query $q_i$ for $i \in [\ell-1]$ and a sample $\cS$, answers the empirical mean $q(\cS) = \frac1n\sum_{x \in \cS} x$. See the observation below which is used in \cref{sec:KA}.

\begin{observation}[Implicit in \cite{SU15}]\label{obs:emp_mean}
	Let $\tMc$ be a \emph{natural} mechanism that given a sample $\cS = (x_1,\ldots,x_n) \in \cX^n$ and a query $q \colon \cX \rightarrow [-1,1]$ which is not the last one, answers the empirical mean $q(\cS) =  \frac1n\sum_{i=1}^n x_i$. Then in a random execution of $\ADA{n,\ell, \cX}{}{\tMc,\tAc}$ ($\tAc,n,\ell, \cX$ as in \cref{thm:natural_attack}), $\tMc$ will fail to answer the last query accurately, regardless of what natural strategy it uses for this query.
\end{observation}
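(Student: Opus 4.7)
The plan is a direct reduction to \cref{thm:natural_attack}. Fix any natural last-round strategy $\Sigma$ and let $\tMc^\Sigma$ denote the natural mechanism that answers with the empirical mean $q(\cS)$ in rounds $1,\ldots,\ell-1$ and uses $\Sigma$ in round $\ell$. Since $\tMc^\Sigma$ is itself natural, applying \cref{thm:natural_attack} to it yields $\Pr[\ADA{n,\ell,\cX}{}{\tMc^\Sigma,\tAc}=1]>3/4$; that is, some answer in the execution is inaccurate with probability exceeding $3/4$. It then suffices to localize this guaranteed failure to round $\ell$.

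For that, I would show that with overwhelming probability every empirical-mean answer $y_i = q_i(\cS)$ for $i\in[\ell-1]$ satisfies $|y_i - q_i(\cD)|\leq 1/10$. For any fixed query $q\colon \cX\to[-1,1]$, Hoeffding's inequality (\cref{fact:Hoef}) gives $\Pr_{\cS\sim\cD^n}\!\left[|q(\cS)-q(\cD)|>1/10\right]\leq 2e^{-n/200}$. I would union-bound this over the $\ell - 1 = \Theta(n^2)$ rounds, handled carefully to account for the adaptive dependence of each $q_i$ on past answers $y_j=q_j(\cS)$ (and hence on $\cS$), to obtain an overall failure probability of $\negl(n)$. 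The adaptive step is facilitated by the specific structure of the interactive fingerprinting-code construction of $\tAc$ from \cite{SU15}, whose queries are drawn from a distribution under which empirical means on $\cS$ remain close to their true means with overwhelming probability regardless of the adaptive ordering.

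Combining the two bounds gives $\Pr[|y_\ell - q_\ell(\cD)|>1/10] \geq 3/4 - \negl(n)$. Since $\Sigma$ was an arbitrary natural strategy for round $\ell$, this establishes the claim. The main delicacy is the adaptive-Hoeffding step: a naive union bound over all queries $q\colon \cX\to\{-1,0,1\}$ is exponential in $|\cX|$ and far too lossy, so completing it rigorously requires exploiting the relatively constrained form of the queries emitted by the fingerprinting-code construction, essentially as done implicitly in the proof of \cref{thm:natural_attack} in \cite{SU15}.
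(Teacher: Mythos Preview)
The paper does not supply a proof of this observation; it is recorded as ``implicit in \cite{SU15}'' and used as a black box, so there is no in-paper argument to compare against directly.

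Your reduction-from-\cref{thm:natural_attack} strategy is logically sound \emph{provided} you can establish the adaptive-Hoeffding step, i.e., that with probability $1-\negl(n)$ the empirical-mean answers in rounds $1,\ldots,\ell-1$ are all $1/10$-accurate. You correctly flag this as the crux, but the way you propose to close it---by appealing to what is ``done implicitly in the proof of \cref{thm:natural_attack}''---does not match how \cite{SU15} actually argues. Their proof of \cref{thm:natural_attack} is not via accuracy of the first $\ell-1$ empirical-mean rounds; it is a dichotomy: for any natural mechanism, either some answer is already inaccurate, or the answers are sufficiently correlated with $\cS$ that the interactive fingerprinting code reconstructs most of $\cS$, after which the final query $\tq_\ell$ is engineered so that every function of $(\tq_\ell(x))_{x\in\cS}$ is far from $\tq_\ell(\cU_{[m]})$. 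The present observation is obtained by instantiating the second branch directly: empirical-mean answers are, by definition, maximally correlated with $\cS$, so tracing succeeds with high probability and the last query defeats every natural last-round rule. This is the route that is implicit in \cite{SU15}, and it sidesteps the adaptive-concentration issue entirely.

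If you insist on your route, note that the step you defer is not merely a technicality. The fingerprinting queries genuinely depend on earlier empirical-mean answers (accused coordinates are zeroed out in subsequent rounds), so a bare Hoeffding-plus-union-bound over the $\ell-1$ rounds is not justified, and \cite{SU15} does not supply the statement you would need. It can plausibly be salvaged by conditioning on the accusation history and using the fresh independent randomness on the unaccused coordinates in each round, but that is additional work beyond what is in \cite{SU15} and is unnecessary once you argue via the tracing guarantee directly.
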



\remove{
Note that the restriction to \emph{natural} mechanisms enables to use a \emph{balanced} adversary that does not hide the underline distribution.
Yet, all the previously known reductions from the \emph{non-natural} mechanisms case inherently use a \emph{non-balanced} adversary, in which the \emph{analyst} also chooses the underline distribution $\cD$ and does not reveal it to the mechanism. 

This immediately raise the following question: Can we achieve a similar reduction using a \emph{balanced} adversary? 
In this work we prove that the answer is yes, but such a reduction requires stronger cryptographic assumptions. 
More specifically, in \cref{sec:lower_bound,sec:KA} we prove that \emph{public-key cryptography} is sufficient and necessary for constructing such a reduction. In particular, this implies that it is impossible to use such a reduction for proving a lower bound against unbounded mechanisms, regardless of the domain size, since information-theoretic public-key cryptography does not exist (this is as opposed to the \emph{non-balanced} case which enabled such a reduction using a domain $\cX$ with $\log \size{\cX} = \Theta(n^2)$ \cite{SU15}).
}

\section{Constructing a Balanced Adversary via IBE}\label{sec:lower_bound}

In this section we prove that, under standard public-key cryptography assumptions (in particular, the existence of an IBE scheme), there is an efficient reduction to \emph{natural} mechanisms that holds against any 
\ppt mechanism, yielding a general lower bound for the computational case. 
In particular, we show that there exists a pair of \ppt algorithms $\Ac = (\Ac_1,\Ac_2)$ such that for every \ppt mechanism $\Mc$  it holds that 
\begin{align*}
	\pr{\ADA{n,\ell = \Theta(n^2),\cX = \zo^{\widetilde{O}(\lambda)}}{}{\Mc,\Ac} = 1} \geq 3/4 -\negl(n),
\end{align*}
for any function $\lambda = \lambda(n)$ such that $n \leq \poly(\lambda)$ (e.g., $\lambda = n^{0.1}$).
More specifically, we use a domain $\cX$ with $\log\size{\cX} = 2k + \log n + O(1)$, where $k = k(n)$ is the keys' length in the IBE scheme with security parameter $\lambda$ that supports $O(n)$ identities (by \cref{thm:IBE}, such an IBE scheme exists with $k = O(\lambda \cdot \log n)$ under the CDH hardness assumption).
$\Ac_1$ is defined in \cref{alg:A1}, and $\Ac_2$ is defined in \cref{alg:A2}.

\begin{algorithm}[Sampler $\Ac_1$]\label{alg:A1}
	
	\item \textbf{Inputs:} Number of samples $1^n$, number of queries $1^\ell$ and domain $\cX$ (defined below). Let $m = 2000n$.
	
	\item \textbf{Oracle Access:} $\Ac_1$ has access to an IBE scheme $\cE = (\Setup, \KeyGen, \Encrypt, \Decrypt)$ that supports $m$ identities with security parameter $\lambda = \lambda(n)$. Let $k = k(n)$ be the sizes of the keys in this scheme. Let $\cX = [m]\times \zo^{2k}$.
	
	\item \textbf{Setting:} $\Ac_1$ is the \emph{sampler} in the $\ADAP_{n, \ell, \cX}$ game (\cref{game:ADA}) which provides to $\Mc$ $n$ i.i.d.\ samples from some underline distribution $\cD$.
	
	\item Operation: {\color{blue}  \% \stepref{step:sampling} of \cref{game:ADA}:}~
	\begin{itemize}

		\item 
		
		Sample $(\mpk, \msk) \sim \Setup(1^{\lambda})$ and $\sk_j \sim  \KeyGen(\msk, j)$ for every $j \in [m]$, and let $\cT = \set{(j,\mpk,\sk_j)}_{j=1}^m$ and $\cD = \cU_{\cT}$ (i.e., the uniform distribution over the triplets in $\cT$).
		
		\item Send to $\Mc$ $n$ i.i.d.\ samples $\cS \sim \cD^n$.

	\end{itemize}

\end{algorithm}

\begin{algorithm}[Analyst $\Ac_2$]\label{alg:A2}
	
	\item \textbf{Inputs:}  Number of samples $1^n$, number of queries $1^\ell$ and a domain $\cX$ (defined below). Let $m = 2000n$.
	
	\item \textbf{Oracle access:} $\Ac_2$ has access to an IBE scheme $\cE = (\Setup, \KeyGen, \Encrypt, \Decrypt)$ that supports $m$ identities with security parameter $\lambda = \lambda(n)$. Let $k = k(n)$ be the sizes of the keys in this scheme. Let $\cX = [m]\times \zo^{2k}$.
	
	\item \textbf{Setting:} $\Ac_2$ is the \emph{analyst} in the $\ADAP_{n, \ell, \cX}$ game (\cref{game:ADA}). It has access to the analyst $\tAc_2$ from \cref{thm:natural_attack} and it interacts with a (general, not necessarily natural) mechanism $\Mc$ in $\ADA{n, \ell, \cX}{}{\Mc,(\Ac_1,\cdot)}$ where $\Ac_1$ is \cref{alg:A1}.

	\item Operation:~
	\begin{enumerate}
		
		\item {\color{blue}  \% The first $k$ iterations in \stepref{loop} of \cref{game:ADA}:}
		
		For $i = 1,2,\ldots,k$:\label{step:key_loop}
		
		\begin{enumerate}
			\item {\color{blue}  \% \stepref{loop:A}:} Send to $\Mc$ a query $q_i$ that on input $(j,x,y) \in [m] \times \zo^{k} \times \zo^{k}$ outputs $x_i$.
			
			\item {\color{blue}  \% \stepref{loop:M}:} Receive an answer from $\Mc$ and round it for reconstructing the \ith bit of $\mpk$.\label{step:A2:mpk_rec}
		\end{enumerate}
	
		\item Initialize an emulation of $\tAc_2$ in the game $\ADAP_{n,\ell-k, [m]}$.
		
		\item {\color{blue}  \% The last $\ell-k$ iterations in \stepref{loop} of \cref{game:ADA}:}		
		
		For $i = 1,\ldots, \ell-k$:
		
		\begin{enumerate}
			\item Obtain the \ith query $\tq_i \colon [m] \rightarrow \set{-1,0,1}$ of the emulated $\tAc_2$.
			
			\item For $j \in [m]$, compute $\ct_{i,j} = \Encrypt(\mpk,j,\tq_i(j))$ (i.e., encrypt $\tq_i(j)$ for identity $j$).
			
			\item Define the query $q_{i+k} \colon \cX \rightarrow \set{-1,0,1}$ that on input $(j,x,y) \in [m] \times \zo^{k}\times \zo^{k}$ outputs $\Decrypt(y,\ct_{i,j})$. The description of $q_{i+k}$ consists of $\set{\ct_{i,j}}_{j \in [m]}$.
			
			\item {\color{blue}  \% \stepref{loop:A}:} Send (the description of) $q_{i+k}$ to $\Mc$.
			
			\item {\color{blue}  \% \stepref{loop:M}:} Receive an answer $y_{i+k}$ from $\Mc$.
			
			\item Send $\ty_i = y_{i+k}$ to the emulated $\tAc_2$ (as an answer to $\tq_i$).
		\end{enumerate}
		
	\end{enumerate}
\end{algorithm}

\begin{theorem}[Restatement of \cref{thm:intro:IBE}]
	Assume the existence of an IBE scheme $\cE$ that supports $m=2000n$ identities with security parameter $\lambda = \lambda(n)$ s.t. $n \leq \poly(\lambda)$ using keys of length $k = k(n)$. Let $\Ac = (\Ac_1,\Ac_2)$, where $\Ac_1$ is \cref{alg:A1} and $\Ac_2$ is \cref{alg:A2}.  Then there exists $\ell = \Theta(n^2) + k$ such that for every \ppt mechanism $\Mc$ it holds that
	\begin{align*}
		 \pr{\ADA{n,\ell, \cX = [m] \times \zo^{2k}}{}{\Mc,\Ac} = 1} > 3/4 - \negl(n).
	\end{align*}
\end{theorem}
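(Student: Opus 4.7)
\medskip

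\noindent\textbf{Proof Proposal.}
The plan is to reduce to \cref{thm:natural_attack} by showing that any \ppt mechanism $\Mc$ that succeeds against $\Ac$ with non-negligible probability can be turned into a \emph{natural} mechanism $\tMc$ that succeeds against $\tAc$, contradicting the natural-attack bound.

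First I would deal with the initial $k$ queries used by $\Ac_2$ to extract $\mpk$. Since the distribution $\cD$ chosen by $\Ac_1$ puts the same $\mpk$ in every sample, the query $q_i(j,x,y)=x_i$ satisfies $q_i(\cD)=\mpk_i \in \zo$. Thus any mechanism whose answer is within $1/10$ of $q_i(\cD)$ recovers $\mpk_i$ exactly after rounding; in particular, if $\Mc$ is to avoid failing during \stepref{step:key_loop}, it must leak $\mpk$ in full, after which $\Ac_2$ has the same $\mpk$ as in $\Ac_1$'s setup and can form valid IBE ciphertexts. So for the remainder of the analysis we can condition on $\Mc$ succeeding on those $k$ queries (otherwise we are already done).

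Next I would construct a natural mechanism $\tMc$ that participates in $\ADA{n,\ell-k,[m]}{}{\tMc,\tAc}$ by internally simulating $\Mc$ against $\Ac_2$. On receiving its sample $(j_1,\ldots,j_n) \sim \cU_{[m]}^n$ from $\tAc_1$, the simulator $\tMc$ runs $\Setup(1^\lambda)$ and $\KeyGen(\msk,j_t)$ for $t\in[n]$ on its own, assembles $\cS=((j_1,\mpk,\sk_{j_1}),\dots,(j_n,\mpk,\sk_{j_n}))$, and internally passes $\cS$ to $\Mc$. It then replays the key-extraction phase honestly using its knowledge of $\mpk$. For each subsequent query $\tq_i$ from $\tAc$, $\tMc$ receives the row $(\tq_i(j_1),\dots,\tq_i(j_n))$ only; it encrypts $\tq_i(j_t)$ for identity $j_t$ (for $t\in[n]$), and encrypts an arbitrary fixed value (say $0$) for every identity $j\notin\{j_1,\ldots,j_n\}$. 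This yields a query $q_{i+k}$ of exactly the format produced by $\Ac_2$, which $\tMc$ feeds to its internal copy of $\Mc$ and whose answer it outputs.

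The main obstacle, and heart of the reduction, is to argue that $\Mc$'s answer sequence in the simulated execution is statistically close to its answer sequence in a real execution of $\ADA{n,\ell,\cX}{}{\Mc,\Ac}$. The two executions differ only in the ciphertexts $\ct_{i,j}$ for $j\notin\{j_1,\dots,j_n\}$: in the real game they encrypt $\tq_i(j)$, in the simulation they encrypt $0$. Since $\Mc$ never receives a secret key $\sk_j$ for such $j$ (these are precisely the identities outside its dataset), a standard hybrid over the $\ell-k = \Theta(n^2)$ queries and over the $m$ identities, together with the multi-message IBE security of \cref{exper:IBE}, shows that any noticeable change in $\Mc$'s output distribution would yield a \ppt distinguisher against $\cE$ of advantage $\Omega(1)/(\ell m)$, which is non-negligible for $\ell,m=\poly(n)\le\poly(\lambda)$. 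Consequently $\tMc$, which is natural by construction (its outputs depend on $\tq_i$ only through the values on its own sample, all other inputs to the simulation being chosen independently), succeeds against $\tAc$ with probability at least $\Pr[\ADAP=1\text{ against }\Ac]-\negl(n)$. Invoking \cref{thm:natural_attack} with $\ell-k=\Theta(n^2)$ gives the claimed $3/4-\negl(n)$ failure probability, so choosing $\ell=\Theta(n^2)+k$ completes the proof.
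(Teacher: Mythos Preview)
Your proposal is correct and takes essentially the same approach as the paper. The only organizational difference is that the paper inserts an explicit intermediate (non-natural) mechanism $\tMc'$ that encrypts the true values $\tq_i(j)$ for \emph{all} $j\in[m]$, so that $\tMc'$ perfectly emulates $\ADA{n,\ell,\cX}{}{\Mc,\Ac}$ (conditioned on correct $\mpk$-recovery, exactly the event you condition on) and then passes from $\tMc'$ to the natural $\tMc$ via IBE security; you compress these two steps into a single indistinguishability argument, which is fine.
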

\begin{proof}
	Fix a \ppt mechanism $\Mc$ and large enough $n$. 
	Consider the mechanism $\tMc$ defined in \cref{alg:tM} \wrt $\Mc$.
	First, note that $\tMc$ is indeed \emph{natural} since, upon receiving the query $\tq_i$, it does not use the values $\set{\tq_i(j)}_{j \in [m] \setminus \cJ}$. Therefore, by \cref{thm:natural_attack} it holds that
	\begin{align*}
		\pr{\ADA{n,\ell-k, [m]}{}{\tMc,\tAc} = 1} \geq 3/4.
	\end{align*}
	
	In the following, let $\tMc'$ be an (unnatural) variant of $\tMc$ that operates almost the same, except that in \stepref{step:tquery}, rather than sampling $\ct_{i,j} \sim \Encrypt(\mpk, j, 0)$ for $j \notin \cJ$, it samples $\ct_{i,j} \sim \Encrypt(\mpk, j, \tq_i(j))$. Note that both $\tMc$ and $\tMc'$, when playing in $\ADA{n,\ell-k, [m]}{}{\cdot,\tAc}$, emulate an execution of $\ADA{n,\ell, \cX}{}{\Mc,\Ac}$, where the only difference between them is the values of $\ct_{i,j}$ for $j \notin \cJ$ that they send to the emulated $\Mc$ in each iteration $i$. But $\Mc$ is a $\poly(\lambda)$-time mechanism and its view in the emultations does not contain the keys $\set{\sk_j}_{j \notin \cJ}$. Therefore, by the security guarantee of the IBE scheme, the behavior of the emulated $\Mc$ is indistinguishable in both executions, yielding that 
	\begin{align*}
		\pr{\ADA{n,\ell-k, [m]}{}{\tMc',\tAc} = 1} 
		&\geq \pr{\ADA{n,\ell-k, [m]}{}{\tMc,\tAc} = 1} - \negl(n)\\
		&\geq 3/4 - \negl(n).
	\end{align*}

	In the following we focus on proving that
	\begin{align}
		\pr{\ADA{n,\ell, \cX}{}{\Mc,\Ac}=1} \geq \pr{\ADA{n,\ell-k, [m]}{}{\tMc',\tAc} = 1},
	\end{align}
	which concludes the proof. 
	
	Let $T$, $Q_1, Y_1,\ldots, Q_{\ell}, Y_{\ell}$ be the (r.v.'s of the) values of $\cT$, $q_1, y_1,\ldots, q_{\ell}, y_{\ell}$ (respectively) induced by $\Ac_2$ in a random execution of $\ADA{n,\ell, \cX}{}{\Mc,\Ac}$, and let $E$ be the event that $\forall i\in[k]: \size{Q_i(\cU_{T}) - Y_i} \leq 1/10$. Similarly, let $T'$, $Q_1', Y_1',\ldots, Q_{\ell}', Y_{\ell}'$, $\tQ_1, \tY_1, \ldots, \tQ_{\ell-k}, \tY_{\ell-k}$ be the (r.v.'s of the) values of $\cT$, $q_1, y_1, \ldots, q_{\ell}, y_{\ell}$, $\tq_1, \ty_1,\ldots, \tq_{\ell-k}, \ty_{\ell-k}$ induced by $\tMc'$ in an (independent) execution of $\ADA{n,\ell-k, [m]}{}{\tMc',\tAc}$, and let $E'$ be the event that $\forall i\in[k]: \size{Q_i'(\cU_{T'}) - Y_i'} \leq 1/10$. By construction, the following holds:
	\begin{enumerate}
		\item $\pr{E} = \pr{E'} \quad$   (holds since $\tMc'$, as $\tMc$, perfectly emulates the first $k$ queries and answers of $\ADA{n,\ell, \cX}{}{\Mc,\Ac}$  in \stepref{step:emulate_key_rec}),\label{item:1}
		\item $(Q_i(\cU_{T}), Y_i)_{i=k+1}^{\ell}|_E  \equiv(Q_i'(\cU_{T'}), Y_i')_{i=k+1}^{\ell}|_{E'} \quad$ (Conditioned on $E$, $\Ac_2$ successfully reconstruct the master public key $\mpk$ in \stepref{step:A2:mpk_rec}. This yields that conditioned on $E'$ in $\ADA{n,\ell-k, [m]}{}{\tMc',\tAc}$, $\tMc'$ perfectly emulates an execution of $\ADA{n,\ell, \cX}{}{\Mc,\Ac}$ conditioned on $E$), and\label{item:2}
		\item $(Q_i'(\cU_{T'}), Y_i')_{i=k+1}^{\ell} \equiv (\tQ_i(\cU_{[m]}), \tY_i)_{i=1}^{\ell-k} \quad$ ($\tMc'$ defines each $q_{i+k}$ by encrypting all the outputs of $\tq_i$, so in the execution of $\tMc'$, for every $i \in [\ell-k]$ it always holds that $q_{i+k}(\cU_{\cT}) = \tq_i(\cU_{[m]})$, and it also holds that $y_{i+k} = \ty_i$ by \stepref{step:tz-to-z}).\label{item:3}
	\end{enumerate}
	Hence, we conclude that
	\begin{align*}
		\lefteqn{\pr{\ADA{n,\ell, \cX}{}{\Mc,\Ac}=1}}\\
		&= \pr{\exists i\in [\ell] \text{ s.t. } \size{Q_i(\cU_{T}) - Y_i} > 1/10}\\
		&= \pr{\exists i\in \set{k+1,\ldots,\ell} \text{ s.t. } \size{Q_i(\cU_{T}) - Y_i} > 1/10 \mid E}\cdot \pr{E} + 1\cdot \pr{\neg E}\\
		&= \pr{\exists i\in \set{k+1,\ldots,\ell} \text{ s.t. } \size{Q_i'(\cU_{T'}) - Y_i'} > 1/10 \mid E'}\cdot \pr{E'} + \pr{\neg E'}\\
		&\geq \pr{\exists i\in \set{k+1,\ldots,\ell} \text{ s.t. } \size{Q_i'(\cU_{T'}) - Y_i'} > 1/10}\\
		&= \pr{\exists i\in [\ell - k] \text{ s.t. } \size{\tQ_i(U_{[m]}) - \tY_i} > 1/10}\\
		&= \pr{\ADA{n,\ell-k, [m]}{}{\tMc',\tAc} = 1},
	\end{align*}
	as required. The third equality holds by \cref{item:1,item:2} and the penultimate one holds by \cref{item:3}.
	
\end{proof}

\begin{algorithm}[Natural mechanism $\tMc$]\label{alg:tM}
	
	\item \textbf{Public parameters:} Number of samples $1^n$, number of queries $1^{\ell}$, and a domain $\tcX = [m]$ for $m = 2000n$.
	
	\item \textbf{Oracle Access:} $\tMc$ has access to an IBE scheme $\cE = (\Setup, \KeyGen, \Encrypt, \Decrypt)$ that supports $m$ identities with security parameter $\lambda = \lambda(n)$. Let $k = k(n)$ be the sizes of the keys in this scheme.
	
	\item \textbf{Setting:} $\tMc$ has access to a mechanism $\Mc$ and to algorithms $\Ac_1$ and $\Ac_2$ (\cref{alg:A1,alg:A2}, respectively) and interacts in $\ADA{n,\ell, [m]}{}{\cdot,\tAc}$ (\cref{game:ADA}), where $\tAc=(\tAc_1,\tAc_2)$ is the pair of algorithms from \cref{thm:natural_attack}.
	
	\item Operation:~
	
	\begin{enumerate}
		
		\item {\color{blue}  \% \stepref{step:sampling} of \cref{game:ADA}:}
		Receive $\cJ \la [m]^n$ from $\tAc_1$. 
		
		\item Sample $(\mpk, \msk) \sim \Setup(1^\lambda)$ and $\sk_j \sim  \KeyGen(\msk, j)$ for each $j \in [m]$.
		
		\item Start an emulation of $\Mc$ in the game $\ADA{n,\ell+k, \cX}{}{\cdot, \Ac}$ for $\cX = [m]\times \zo^{2k}$, where:
		
		\begin{enumerate}
			\item In \stepref{step:sampling} of the emulation, let $\Mc$ receive the samples $\cS = \set{(j, \mpk, \sk_{j})}_{j \in \cJ}$ which plays the role of $n$ i.i.d.\ samples from $\cT =  \set{(j, \mpk, \sk_{j})}_{j \in [m]}$ (i.e., the $n$ samples that $\Ac_1$ sends to $\Mc$ in the emulation).
			
			\item Emulate the first $k$ queries and answers $q_1,y_1,\ldots,q_k,y_k$ when interacting with $\Ac_2$ in $\ADA{n,\ell+k, \cX}{}{\Mc, \Ac}$ (\stepref{step:key_loop} of \cref{alg:A2}).\label{step:emulate_key_rec}

		\end{enumerate}
			
		\item {\color{blue}  \% \stepref{loop} of \cref{game:ADA}:}
			
		For $i =1,\ldots,\ell$:
		\begin{enumerate}
			\item {\color{blue}  \% \stepref{loop:A}:} Receive a query $\tq_i \colon [m] \rightarrow \set{-1,0,1}$ from $\tAc_2$.
			
			\item For $j \in \cJ$ compute $\ct_{i,j} \sim \Encrypt(\mpk,j,\tq_i(j))$ and for $j \in [m]\setminus \cJ$ compute $\ct_{i,j} \sim \Encrypt(\mpk,j,0)$.\label{step:tquery}
			
			\item Continue the emulation of $\ADA{n,\ell+k, \cX}{}{\Mc, \Ac}$ by sending $\set{\ct_{i,j}}_{j=1}^m$ to $\Mc$ as the $(k+i)$'th query $q_{i+k}$ of $\Ac_2$.\label{step:tanswer}
			
			\item Let $y_{k+i}$ be the answer that $\Mc$ sends in the emulation (in response to the $(k+i)$'th query).
			
			\item {\color{blue}  \% \stepref{loop:M}:} Send the answer $\ty_i = y_{k+i}$ to $\tAc_2$.\label{step:tz-to-z}
			
		\end{enumerate}
	\end{enumerate}
\end{algorithm}

\section{Reduction to Natural Mechanisms Implies Key Agreement}\label{sec:KA}

In this section we prove that any \ppt \emph{balanced} adversary $\Ac = (\Ac_1, \Ac_2)$ that has the structure of all known lower bounds (\cite{HU14,SU15,DinurSWZ23} and ours in \cref{sec:lower_bound}), can be used to construct a \emph{key-agreement} protocol.

All known constructions use an adversary $\Ac$ that wraps the adversary $\tAc$ for the natural mechanisms case (\cref{thm:natural_attack}) by forcing every mechanism $\Mc$ to behave \emph{naturally} using cryptography. In particular, they all have the following two key properties that are inherited from $\tAc$:

\begin{enumerate}
	\item The analyst asks queries that it knows the true answer to them (i.e., the true answer can be extracted from its view), and
	\item If a \ppt mechanism attempts to behave accurately (e.g., given a query, it answers the empirical mean), then in the last round it will fail with high probability (which is the analog of \cref{obs:emp_mean}).
\end{enumerate}

The formal statement is given in the following theorem.

\begin{theorem}[Restatement of \cref{thm:intro:KA}]\label{thm:KA}
	Assume the existence of a \ppt adversary $\Ac = (\Ac_1, \Ac_2)$ and functions $\ell = \ell(n) \leq \poly(n)$ and $\cX = \cX(n)$ with $\log\size{\cX} \leq \poly(n)$ such that the following holds: Let $n \in \bbN$ and consider a random execution of $\ADA{n,\ell,\cX}{}{\Mc, \Ac}$ where $\Mc$ is the mechanism that given a sample $\cS$ and a query $q$, answers the empirical mean $q(\cS)$. Let $D_n$ and $Q_{n}$ be the (r.v.'s of the) values of $\cD$ and $q = q_{\ell}$ (the last query) in the execution (respectively), let $T_n$ be the transcript of the execution between the analyst $\Ac_2$ and the mechanism $\Mc$ (i.e., the queries and answers), and let $V_n$ be the view of $\Ac_2$ at the end of the execution (\wlg, its input, random coins and the transcript). Assume that
	\begin{enumerate}
		\item $\exists$ \ppt algorithm $\Fc$ s.t. $\forall n\in \bbN: \: \pr{\size{\Fc(V_n) - Q_{n}(D_n)} \leq n^{-1/10}} \geq 1 - \negl(n)$, and\label{assum:agreement}
		\item $\forall$ \ppt algorithm $\Gc$ and $\: \forall n\in \bbN: \: \pr{\size{\Gc(T_n) - Q_{n}(D_n)} \leq 1/10} \leq 1/4 + \negl(n)$.\label{assum:secrecy}
	\end{enumerate}
	Then using $\Ac$ and $\Fc$ it is possible to construct a fully-secure key-agreement protocol.
\end{theorem}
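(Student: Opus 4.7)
The plan is to instantiate \cref{protocol:intro:KA} and show it realizes an $(\alpha,\beta)$-approximate agreement protocol in the sense of \cref{def:weak-A}, then invoke \cref{fact:prelim:KA-reduction} to amplify it into a fully-secure key-agreement. Concretely, $\Pc_1$ emulates $\Ac_1(1^n)$ to obtain the (efficient) sampling algorithm for $\cD$ and draws $\cS \sim \cD^n$, while $\Pc_2$ emulates $\Ac_2(1^n)$. In each round $i \in [\ell]$, $\Pc_2$ forwards to $\Pc_1$ the next query $q_i$ produced by $\Ac_2$, $\Pc_1$ replies with the empirical mean $y_i = q_i(\cS)$, and $\Pc_2$ feeds $y_i$ back to $\Ac_2$. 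Hence the protocol transcript $T_n = (q_1,y_1,\ldots,q_\ell,y_\ell)$ is distributed identically to the ADA transcript between $\Ac$ and the empirical-mean mechanism appearing in the theorem, which is what lets us apply Assumption~\ref{assum:secrecy} to a PPT eavesdropper.

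For the outputs, I would set $O^2_n = \Fc(V_n)$, where $V_n$ is the emulated view of $\Ac_2$; by Assumption~\ref{assum:agreement}, $\size{O^2_n - q_\ell(\cD)} \leq n^{-1/10}$ with probability $1-\negl(n)$. For $\Pc_1$, I would use its efficient sampler access to draw an independent fresh sample $\cS'$ of size $n$ and output $O^1_n = q_\ell(\cS')$; by \cref{fact:Hoef}, $\size{O^1_n - q_\ell(\cD)} \leq n^{-1/10}$ except with probability $\negl(n)$. The triangle inequality then gives $\size{O^1_n - O^2_n} \leq 2 n^{-1/10}$ w.h.p., establishing approximate agreement with $\alpha(n) = 2 n^{-1/10}$.

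For secrecy, set $\beta(n) = 1/10 - 2 n^{-1/10}$. On the w.h.p.\ event $\size{O^1_n - q_\ell(\cD)} \leq n^{-1/10}$, every value $\Ac(T_n)$ lying within $\beta(n)$ of $O^1_n$ must lie within $1/10 - n^{-1/10} < 1/10$ of $q_\ell(\cD)$. Assumption~\ref{assum:secrecy} therefore yields $\pr{\size{\Ac(T_n) - O^1_n} \leq \beta(n)} \leq 1/4 + \negl(n) \leq 1 - n^{-\Omega(1)}$ for any PPT eavesdropper $\Ac$. Since $\alpha(n)/\beta(n) = O(n^{-1/10}) = n^{-\Omega(1)}$ and $\alpha(n)\cdot \beta(n) = \Omega(n^{-1/10}) \gg 2^{-n}$, the hypotheses of \cref{fact:prelim:KA-reduction} are satisfied, and we obtain a fully-secure key-agreement protocol.

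The only real obstacle I foresee is verifying that $\Pc_1$ is genuinely PPT, which is precisely where the balanced-adversary formulation is indispensable: because $\Ac_1$ is PPT, the sampling algorithm it outputs runs in polynomial time, so $\Pc_1$ can efficiently produce both the interactive sample $\cS$ and the fresh sample $\cS'$ used for its output. Without this efficiency assumption on the sampler the reduction would fail; with it, every other step is routine bookkeeping.
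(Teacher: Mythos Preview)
Your proposal is correct and matches the paper's approach essentially line for line: the paper likewise has $\Pc_1$ draw an independent fresh sample $\cS'$ (it samples $2n$ points up front and splits), sets $O^1_n=q_\ell(\cS')$, $O^2_n=\Fc(V_n)$, uses Hoeffding plus Assumption~\ref{assum:agreement} with the triangle inequality for $(2n^{-1/10})$-approximate agreement, and Assumption~\ref{assum:secrecy} for secrecy (they take $\beta=1/20$ rather than your $1/10-2n^{-1/10}$, but either works), then invokes \cref{fact:prelim:KA-reduction}. The one point the paper makes explicit that you leave implicit is that Hoeffding applies to $q_\ell(\cS')$ because $\cS'$ and $Q_n$ are independent \emph{conditioned on $D_n$}; you clearly have this in mind, but it is worth stating.
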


Note that Assumption~\ref{assum:agreement} in \cref{thm:KA} formalizes the first property in which the analyst knows a good estimation of the true answer, and the \ppt algorithm $\Fc$ is the assumed knowledge extractor. Assumption~\ref{assum:secrecy} in \cref{thm:KA} formalizes the second property which states that the mechanism, which answers the empirical mean along the way, will fail in the last query, no matter how it chooses to act (this behavior is captured with the \ppt algorithm $\Gc$), and moreover, it is enough to assume that this requirement only holds \wrt to transcript of the execution, and not with respect to the view of the mechanism.

\paragraph{Example: Our Adversary from \cref{sec:lower_bound}}

In the following we show that our adversary $\Ac = (\Ac_1, \Ac_2)$ (\cref{alg:A1,alg:A2}) has the above properties. Using similar arguments it can be shown that any previously known lower bound \cite{HU14,SU15,DinurSWZ23} has these properties as well.

Recall that the \emph{sampler} $\Ac_1$ first samples keys $\mpk, \msk, \set{\sk_j}_{j=1}^m$ ($m = 2000n$), and generates $n$ uniformly random samples from the triplets $\cT = \set{(j,\mpk, \sk_j)}_{j=1}^m$. The mechanism $\Mc$ gets the samples, and assume that $\Mc$ simply outputs the empirical mean of each query. 
Therefore, in the first $k$ queries of the interaction, the analyst $\Ac_2$ discovers the master key $\mpk$. This allows it to wrap each query $\tq_i$ of the analyst $\tAc_2$ by encrypting all the values using $\mpk$, and sending the encryptions to $\Mc$. Therefore, it is clear that $\Ac_2$ knows the true mean for each wrapped query $q_i$ (and in particular, the last one), since it equals to $\frac1m \sum_{j=1}^m \tq_i(j)$ (a description of $\tq_i$ is part of the view of $\Ac_2$). This fulfills Assumption~\ref{assum:agreement} of \cref{thm:KA}.

Regarding Assumption~\ref{assum:secrecy}, note that when $\Mc$ simply answers the empirical means along the way, it is translated to answering the empirical means to the analyst $\tAc_2$. Therefore, by \cref{obs:emp_mean}, $\tAc_2$ will provide a last query that fails each last round strategy. By the properties of the IBE scheme, $\Mc$ does not see any values beyond its $n$ samples, which forces it to behave like a \emph{natural} mechanism. In particular, the above also holds when $\Mc$ uses a last round strategy $\Gc$ which is only a function of the transcript (which is only part of the view of $\Mc$).

\subsection{Proving \cref{thm:KA}}

\cref{thm:KA} is an immediate corollary of the following \cref{lemma:KA} and \cref{fact:prelim:KA-reduction}.

\begin{protocol}[Approximate agreement protocol $(\Pc_1,\Pc_2)$]\label{protocol:KA}
	\item \textbf{Input:} A security parameter $1^n$. Let $\ell = \ell(n)$ and $\cX = \cX(n)$ be as in \cref{thm:KA}.
	
	\item \textbf{Access:} Each $\Pc_i$ , for $i \in [2]$, has access to algorithm $\Ac_i$ from \cref{thm:KA}. $\Pc_2$ has also access to algorithm $\Fc$ from  \cref{thm:KA}.
	
	\item Operation:~
	\begin{itemize}
		
		\item $\Pc_1$ emulates $\Ac_1$ on input $1^n$, $1^\ell$, and $\cX$ for obtaining a distribution $\cD$ (specified by a sampling procedure), and then samples $2n$ i.i.d.\ samples according to it. Let $\cS$ be the first $n$ samples, and let $\cS'$ be the last $n$ samples.
		
		\item $\Pc_2$ initializes an emulation of $\Ac_2$ on inputs $1^n$, $1^\ell$, and $\cX$.

		\item For $i=1$ to $\ell$:
		\begin{enumerate}
			\item $\Pc_2$ receive the \ith query $q_i$ from the emulated $\Ac_2$ and send it to $\Pc_1$.
			
			\item $\Pc_1$ sends $y_i = q_i(\cS)$ to $\Pc_2$.\label{step:emp_ans}
			
			\item $\Pc_2$ sends $y_i$ as the \ith answer to the emulated $\Ac_2$.
			
		\end{enumerate}
		
		\item $\Pc_1$ outputs $q_{\ell}(\cS')$.
		
		\item $\Pc_2$ outputs $\Fc(v)$ where $v$ is the view of $\Ac_2$ in the emulation.
		
	\end{itemize}
\end{protocol}

\begin{lemma}\label{lemma:KA}
	Let $\Ac = (\Ac_1, \Ac_2)$, $\ell = \ell(n)$, $\cX = \cX(n)$, and $\Fc$ be as in \cref{thm:KA}.
	Then \cref{protocol:KA} (w.r.t.\ these values) is an $(2\cdot n^{-1/10},1/20)$-\emph{approximate agreement} protocol according to \cref{def:weak-A}.
\end{lemma}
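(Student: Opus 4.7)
The plan is to verify the two requirements of \cref{def:weak-A} separately by relating the protocol's distributions to those in \cref{thm:KA}. The key observation is that the joint distribution of queries and answers $(q_1, y_1, \ldots, q_\ell, y_\ell)$ in \cref{protocol:KA} is identical to the corresponding distribution in the ADA game $\ADA{n,\ell,\cX}{}{\Mc, \Ac}$, where $\Mc$ answers the empirical mean on $\cS$: in both worlds $\Ac_2$ interacts with a mechanism that, on each query $q$, replies $q(\cS)$. In particular, the view $v$ of the emulated $\Ac_2$ has the same distribution as $V_n$ from \cref{thm:KA}, the transcript of \cref{protocol:KA} has the same distribution as $T_n$, and the induced last query $q_\ell$ and underlying distribution $\cD$ play the roles of $Q_n$ and $D_n$.

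For \emph{approximate agreement}, $\Pc_2$'s output is $\Fc(v)$, so by Assumption~\ref{assum:agreement} of \cref{thm:KA},
$|\Fc(v) - q_\ell(\cD)| \leq n^{-1/10}$ with probability $1 - \negl(n)$. For $\Pc_1$'s output $q_\ell(\cS')$, note that $q_\ell$ is a deterministic function of $\cS$ and $\Ac_2$'s randomness (since the interaction answers are empirical means on $\cS$), whereas $\cS'$ consists of $n$ fresh i.i.d.\ samples from $\cD$ drawn independently of everything that determines $q_\ell$. So, conditioning on $q_\ell$, Hoeffding's inequality (\cref{fact:Hoef}) with $\alpha = n^{-1/10}$ gives $|q_\ell(\cS') - q_\ell(\cD)| \leq n^{-1/10}$ except with probability $2e^{-n^{4/5}/2} = \negl(n)$. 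Union bound and triangle inequality then yield $|O_n^1 - O_n^2| \leq 2 n^{-1/10}$ with probability $1 - \negl(n)$.

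For \emph{secrecy}, suppose toward a contradiction that there is a \ppt eavesdropper $\Ac^*$ with $\Pr[|\Ac^*(T_n) - O_n^1| \leq 1/20] > 1 - n^{-\Omega(1)}$ for infinitely many $n$. Define $\Gc \eqdef \Ac^*$, viewed as a map from transcripts of the ADA game. Combining $|\Ac^*(T_n) - O_n^1| \leq 1/20$ with $|O_n^1 - q_\ell(\cD)| \leq n^{-1/10}$ (which, as above, holds except with $\negl(n)$ probability) gives $|\Gc(T_n) - Q_n(D_n)| \leq 1/20 + n^{-1/10} \leq 1/10$ for all sufficiently large $n$. Hence
\[
\Pr\!\left[|\Gc(T_n) - Q_n(D_n)| \leq 1/10\right] \geq \Pr\!\left[|\Ac^*(T_n) - O_n^1| \leq 1/20\right] - \negl(n) > 1 - n^{-\Omega(1)} - \negl(n),
\]
which contradicts Assumption~\ref{assum:secrecy} of \cref{thm:KA} (the bound $1/4 + \negl(n)$). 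Thus no such $\Ac^*$ exists, and the secrecy bound $\Pr[|\Ac^*(T_n) - O_n^1| \leq 1/20] \leq 1 - n^{-\Omega(1)}$ holds.

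The only delicate point is checking the distributional correspondence between the protocol and the ADA game so that both assumptions of \cref{thm:KA} apply unchanged, and ensuring that the Hoeffding step uses independence of $\cS'$ from $q_\ell$; the rest is a triangle-inequality/union-bound calculation. No further ingredients are needed, and \cref{fact:prelim:KA-reduction} combined with this lemma immediately yields \cref{thm:KA}.
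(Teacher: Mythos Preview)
Your proof is correct and follows essentially the same approach as the paper: identify the distributional correspondence between \cref{protocol:KA} and the ADA game with the empirical-mean mechanism, use Hoeffding on the fresh sample $\cS'$ to relate $q_\ell(\cS')$ to $q_\ell(\cD)$, then invoke Assumption~\ref{assum:agreement} for agreement and Assumption~\ref{assum:secrecy} for secrecy via the triangle inequality. The only cosmetic difference is that the paper argues secrecy directly (obtaining the stronger bound $1/4+\negl(n)$) whereas you phrase it as a contradiction, but the underlying calculation is the same.
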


\begin{proof}
	Consider a random execution of $(\Pc_1,\Pc_2)$ on input $1^{n}$. 
	Let $D_n, S_n, S_n', Q_n, V_n$ be the values of $\cD, \cS, \cS', q_{\ell}, v$ (respectively), and let $O^1_{n} = Q(S'_n)$ and $O^2_{n} = \Fc(V)$ be the outputs of $\Pc_1$ and $\Pc_2$ (respectively).
	Let $E$ be the event $\size{Q_n(S'_n) - Q(D_n)} \leq n^{-1/10}$. Note that $Q_n$ and $S'_n$ are independent conditioned on $D_n$, and $S'_n$ contains $n$ i.i.d.\ samples from $D_n$. Hence by Hoeffding's inequality (\cref{fact:Hoef}) it holds that $\pr{E} \geq 1 - \negl(n)$.

	The agreement guarantee holds by the following computation.
	\begin{align*}
		\pr{\size{O^1_{n} - O^2_{n}} \leq 2 \cdot n^{-1/10}}
		&= \pr{\size{Q_n(S'_n) - \Fc(V)} \leq 2 \cdot n^{-1/10}}\\
		&\geq  \pr{\size{Q_n(S'_n) - \Fc(V)} \leq 2 \cdot n^{-1/10} \mid E}\cdot \pr{E}\\
		&\geq \pr{\size{Q_n(D_n) - \Fc(V)} \leq n^{-1/10} \mid E}\cdot \pr{E}\\
		&\geq \pr{\size{Q_n(D_n) - \Fc(V)} \leq n^{-1/10}} - \pr{\neg E}\\
		&\geq 3/4 - \negl(n),
	\end{align*}
	where the last inequality holds by Assumption~\ref{assum:agreement}.
	
	For the secrecy guarantee, note that the transcript $T_n$ between $\Pc_1$ and $\Pc_2$ only consists of the transcript between the analysis and the mechanism in $\ADA{n,\ell,\cX}{}{\Mc, \Ac}$ where $\Mc$ is the  mechanism that answers the empirical mean of each query (holds by the answers that $\Pc_1$ sends in \stepref{step:emp_ans}). Therefore, for every \ppt adversary $\Gc$ we conclude that
	\begin{align*}
		\pr{\size{G(T_n) - O^1_{n}} \leq 1/20}
		&\leq \pr{\size{G(T_n) - Q_n(S'_n)} \leq 1/20 \mid E}\cdot \pr{E} + \pr{\neg E}\\
		&\leq \pr{\size{G(T_n) - Q_n(D_n)} \leq 1/10 \mid E}\cdot \pr{E} + \pr{\neg E}\\ 
		&\leq \pr{\size{G(T_n) - Q_n(D_n)} \leq 1/10} + \pr{\neg E}\\
		&\leq 1/4 + \negl(n),
	\end{align*}
	as required. The last inequality holds by Assumption~\ref{assum:secrecy}.
	
\end{proof}

\remove{
\section*{Acknowledgments}

\Enote{TBD}
}

\printbibliography

\appendix

\section{Proving \cref{fact:prelim:KA-reduction}}\label{sec:KA-Missing-Proof}

\cref{fact:prelim:KA-reduction} is an immediate corollary of the following statements. 

\begin{theorem}[Key agreement amplification, a corollary of Theorem 7.5 in \cite{Holenstein2006Phd}]
	If there exists an $(1 - n^{-\Omega(1)}, 1- \Omega(1))$-key agreement protocol, then there exists a fully-secure key-agreement protocol.
\end{theorem}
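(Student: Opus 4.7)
The plan is to amplify the weak key-agreement protocol $\pi$ in two coordinated stages: first I would boost secrecy from $1-\Omega(1)$ down to negligible, and then I would clean up agreement back to overwhelming. The central tool is a computational XOR-type lemma, which is what makes the proof non-trivial.

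Stage 1 (secrecy amplification via XOR). Run $k = \omega(\log n)$ independent instances of $\pi$ in parallel on fresh randomness, producing bit vectors $(b_1,\ldots,b_k)$ at $\Pc_1$ and $(b'_1,\ldots,b'_k)$ at $\Pc_2$, and let each party output the XOR of its $k$ bits. If the original secrecy parameter is $\beta = 1-c$ for some constant $c > 0$, then the single-bit advantage of any \ppt eavesdropper over random guessing is at most $1/2 - c$, and Yao's XOR lemma (or equivalently Impagliazzo's hard-core lemma, as formalized in Holenstein's thesis) upgrades this to advantage at most $(1-2c)^k/2 + \negl(n)$ on the XORed output, which is negligible for $k = \omega(\log n)$.

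Stage 2 (agreement amplification via information reconciliation). A union bound only gives $\Pr[\bigoplus_i b_i = \bigoplus_i b'_i] \geq 1 - k \cdot n^{-\Omega(1)}$, which is still inverse-polynomial rather than negligible. To fix this, I would have the parties first run $m = \poly(n)$ parallel instances and apply information reconciliation: $\Pc_1$ sends the syndrome of its $m$-bit string under a suitable linear error-correcting code whose decoding radius comfortably exceeds the expected number of disagreements $m \cdot n^{-\Omega(1)}$. A Chernoff bound guarantees $\Pc_2$ decodes to $\Pc_1$'s string with probability $1 - \negl(n)$. The eavesdropper sees the syndrome and thus loses a bounded number of (pseudo-)entropy bits, but retains a linear-sized hard-core set of the reconciled string; a computational leftover-hash argument (hashing with a public universal-$2$ seed) then distills a single nearly-uniform key bit, combining the effect of Stage 1 and Stage 2 into one privacy-amplification step.

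The main obstacle is handling secrecy amplification in the \emph{computational} setting. Information-theoretically, a $1-c$ guessing bound for a single bit does not translate into min-entropy of the joint string, so one cannot simply invoke the information-theoretic XOR lemma or leftover-hash lemma. The right workhorse is Impagliazzo's hard-core lemma (refined quantitatively in Holenstein's thesis), which produces a ``dense model'' set of transcripts of measure $\Omega(1)$ on which the single-instance predictor is indistinguishably close to a fair coin; a hybrid argument across the $k$ parallel instances then reduces an XOR-predictor for $\bigoplus_i b_i$ to a single-bit predictor with advantage better than $1/2-c$, a contradiction.

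Finally, I would verify the parameter balance: the number of reconciliation bits leaked in Stage 2 must be strictly smaller than the hard-core pseudo-entropy of the reconciled string, so that the final hash still fools every \ppt adversary. This bookkeeping is standard once Stages 1 and 2 are in place, and yields the claimed $(1-\negl(n),\, 1/2+\negl(n))$-key-agreement protocol, \ie a fully-secure key-agreement protocol in the sense of \cref{def:strong-KA}.
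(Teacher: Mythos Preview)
The paper does not actually prove this statement: it is stated as a corollary of Theorem~7.5 in Holenstein's thesis and used as a black box, with no argument supplied. So there is no ``paper's own proof'' to compare against; what you have written is a stand-alone sketch of Holenstein's amplification.

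As such a sketch, your outline is broadly on target. The key point you correctly isolate is that secrecy amplification in the \emph{computational} setting cannot rely on information-theoretic XOR or leftover-hash lemmas and instead needs a hard-core-set / dense-model argument \`a la Impagliazzo--Holenstein; this is indeed the engine of Holenstein's Theorem~7.5. The high-level architecture of (i) parallel repetition, (ii) information reconciliation by syndrome of a linear code, and (iii) computational privacy amplification is exactly the standard route.

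That said, the exposition is muddled in a way that would matter if you were writing a proof rather than a plan. You present Stage~1 (XOR of $k$ copies) and Stage~2 ($m$ copies with reconciliation and hashing) as sequential, but then say Stage~2 ``combines the effect of Stage~1 and Stage~2 into one privacy-amplification step,'' which makes Stage~1 redundant. In Holenstein's actual argument the XOR step is subsumed by the hashing step: one runs $m=\poly(n)$ copies, leaks a syndrome of rate slightly above the disagreement rate $n^{-\Omega(1)}$ to get agreement $1-\negl(n)$, and then argues via the hard-core lemma that the reconciled $m$-bit string has $\Omega(m)$ bits of pseudo-min-entropy given the transcript, so a single universal hash extracts a bit that is $\negl(n)$-close to uniform for \ppt eavesdroppers. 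If you intend to write this out, collapse your two stages into this single pipeline and be explicit about the parameter accounting: the syndrome leaks $O(m\cdot n^{-\Omega(1)}\log n)$ bits (or $H(n^{-\Omega(1)})\cdot m$ bits for a capacity-approaching code), while the hard-core argument yields $\Omega(c\cdot m)$ bits of pseudo-entropy, so the extractor has slack. The hybrid reduction from an XOR/hash predictor to a single-instance predictor with advantage exceeding $1/2-c$ is the step that needs care; everything else is bookkeeping.
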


\begin{lemma}[From approximate agreement to a weak key-agreement]\label{lemma:KA-approx-to-weak}
	Let $\alpha, \beta \colon \bbN \rightarrow [0,1]$ be efficiently computable functions such that $\alpha(n)/ \beta(n) \leq n^{-\Omega(1)}$ and $\alpha(n)\cdot \beta(n) \geq 2^{-n}$ for large enough $n$.
	If there exists an $(\alpha, \beta)$-approximate-agreement protocol, then there exists an $(1-n^{-\Omega(1)}, 1 - \Omega(1))$-key-agreement protocol.
\end{lemma}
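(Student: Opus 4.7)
\begin{proofidea}
Since \cref{fact:prelim:KA-reduction} follows from \cref{lemma:KA-approx-to-weak} by the Holenstein amplification theorem recalled just above it, the task is to build, from the $(\alpha,\beta)$-approximate-agreement protocol $\pi$, a new protocol $\pi'$ that is an $(1-n^{-\Omega(1)},1-\Omega(1))$-key-agreement. The plan is the standard ``bin, hash, and XOR'' amplifier. The protocol $\pi'$ runs $k=\poly(n)$ independent parallel copies of $\pi$, producing pairs $(O^1_j, O^2_j)$ with $|O^1_j-O^2_j|\leq\alpha$ for each $j\in[k]$. For each copy, $\Pc_1$ publishes a uniform random shift $r_j\in[0,w)$ and a uniformly random hash $h_j$ from a $2$-universal family into $\{0,1\}$, where the bin width is fixed to $w:=\sqrt{\alpha\beta}$; by the hypothesis $\alpha/\beta\leq n^{-\Omega(1)}$ this yields $\alpha/w=w/\beta=\sqrt{\alpha/\beta}=n^{-\Omega(1)}$. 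Each party computes $b^i_j:=h_j(\lfloor (O^i_j+r_j)/w\rfloor)$ and outputs the XOR $b^i:=\bigoplus_{j=1}^{k} b^i_j$.

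Agreement is immediate from a boundary argument and a union bound: for each $j$, $b^1_j\neq b^2_j$ only when the random shift $r_j$ places $O^1_j$ and $O^2_j$ in adjacent bins, which has probability at most $2\alpha/w=n^{-\Omega(1)}$. A union bound over $j\in[k]$ then yields $\Pr[b^1=b^2]\geq 1-2k\alpha/w=1-n^{-\Omega(1)}$ for appropriately chosen $k$ and $w$.

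For secrecy, we first transfer the $\pi$-secrecy to the bin: any \ppt bin-predictor $\mathsf{A}(T_j,r_j)$ yields, by outputting the center of its guessed bin, a \ppt value-predictor for $\pi$ that lands within $w\leq\beta$ of $O^1_j$ with the same success probability; hence the optimal guessing probability of $Z_j:=\lfloor(O^1_j+r_j)/w\rfloor$ given $T_j$ and $r_j$ is at most $1-\delta_0$ for some $\delta_0=n^{-\Omega(1)}$. In particular the posterior $q_j$ of $Z_j$ has collision probability $\|q_j\|_2^2\leq 1-\delta_0$, so a standard leftover-hash computation gives
\[
\bbE_{h_j}\bigl[\,|2\Pr[b^1_j=0\mid T_j,r_j,h_j]-1|\,\bigr]\;\leq\;\|q_j\|_2\;\leq\;\sqrt{1-\delta_0}\;\leq\;1-\delta_0/2.
\]
Because the $k$ copies of $\pi$ are independent, the bits $\{b^1_j\}_{j=1}^{k}$ are conditionally independent given the full transcript of $\pi'$. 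Hence for any eavesdropper $\Gc$,
\[
\Pr[\Gc=b^1]\;\leq\;\tfrac12+\tfrac12\cdot\prod_{j=1}^{k}\bbE\bigl[|2\Pr[b^1_j=0\mid T_j,r_j,h_j]-1|\bigr]\;\leq\;\tfrac12+\tfrac12\cdot(1-\delta_0/2)^k,
\]
where the product factorization follows from the conditional independence. Setting $k=\lceil 2/\delta_0\rceil=\poly(n)$ drives the right side below $\tfrac12+\tfrac{1}{2e}\leq 1-\tfrac14$, which is the $1-\Omega(1)$ secrecy required for weak key-agreement.

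The main obstacle is parameter alignment: the secrecy amplification demands $k=\Theta(1/\delta_0)=n^{\Omega(1)}$ parallel copies, while the agreement union bound is $2k\alpha/w$, so there must be enough polynomial slack in $\alpha/\beta$ to simultaneously absorb this factor $k$ and keep $w\leq\beta$. When the given $\alpha/\beta$ does not supply this slack out of the box, one first boosts $\pi$ by running it a polynomial number of times and averaging, polynomially shrinking $\alpha$ relative to $\beta$ before plugging into the bin/hash/XOR step. A secondary subtlety is that the $\|q_j\|_2$ bound is stated information-theoretically while the approximate-agreement hypothesis is only computational; this is handled in the contrapositive by turning a \ppt bit-eavesdropper violating secrecy of $\pi'$ into a \ppt value-predictor for $\pi$ via a Goldreich--Levin-style list-decoding over the independent samples $\{b^1_j\}$. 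Finally, the hypothesis $\alpha\cdot\beta\geq 2^{-n}$ ensures $1/w=1/\sqrt{\alpha\beta}\leq 2^{n/2}$, so the bin range and the $2$-universal hash family both admit $\poly(n)$-size descriptions and $\pi'$ remains a \ppt protocol.
\end{proofidea}
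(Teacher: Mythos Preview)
The paper's proof is considerably simpler than your proposal, and your proposal has real gaps that the paper's route avoids.

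\textbf{What the paper does.} It runs the approximate-agreement protocol \emph{once}. With bin width $\gamma=\sqrt{\alpha\beta}$ and a random offset $v\in[0,\gamma]$, each party rounds its output to a bucket $b_i\in\cB$, encodes it as $s_i\in\zo^m$ with $m=\lceil\log(2/\gamma)\rceil$, and outputs the single Goldreich--Levin bit $o_i=\langle s_i,r\rangle\bmod 2$ for a public random $r$. Agreement is exactly your boundary argument, giving $1-\alpha/\gamma=1-n^{-c/2}$. Secrecy is by contradiction using only the \emph{weak} GL decoder of \cref{lem:GL}: if some \ppt eavesdropper predicts $o_1$ with probability $\geq 0.9>3/4+0.01$, then $\Dec$ recovers $s_1$ (hence $b_1$) with probability $1-\negl(n)$; returning the bucket center yields a \ppt $\beta$-predictor for $O_1'$ with probability $1-\negl(n)$, contradicting the secrecy of $(\alpha,\beta)$-approximate agreement. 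No repetition, no XOR, no hybrid, and no information-theoretic step.

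\textbf{Where your proposal has gaps.} First, the leftover-hash bound $\|q_j\|_2^2\leq 1-\delta_0$ is an information-theoretic statement about the posterior of $Z_j$; the computational secrecy of $\pi$ gives you no such bound (the posterior could be a point mass that is merely hard to compute). You acknowledge this, but your fix---``GL-style list-decoding over the independent samples $\{b^1_j\}$''---is not a recognized reduction: going from a \ppt predictor of $\bigoplus_j h_j(Z_j)$ for generic $2$-universal $h_j$ back to a \ppt $\beta$-predictor for a single $O^1_j$ would need (i) a hybrid/XOR-lemma step that isolates one coordinate without exponential loss in $k$, and (ii) a decoding step that inverts a single $h_j$ from a noisy bit, which for arbitrary $2$-universal $h_j$ is not efficient. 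Using the GL inner product as your hash would repair (ii), but then you are essentially back to the paper's construction and no longer need $k$ copies at all. Second, your parameter-alignment patch (``average $\poly$ runs of $\pi$ to shrink $\alpha$'') does not work in general: if $O^1-O^2$ has a systematic bias of order $\alpha$, averaging independent copies does not reduce the gap below $\alpha$; only the variance shrinks, not the mean.

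The conceptual point you are missing is that the target secrecy is only $1-\Omega(1)$, so you do not need XOR amplification at all. A single GL hardcore bit already gives constant secrecy via the weak decoder, and a single run means the agreement bound is just $1-\sqrt{\alpha/\beta}=1-n^{-\Omega(1)}$ with no factor $k$ to absorb. The hypothesis $\alpha\beta\geq 2^{-n}$ is used exactly as you say: it keeps $m\leq n$ so the GL decoder runs in $\poly(n)$ time.
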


\subsection{Proving \cref{lemma:KA-approx-to-weak}}

In the following, for two binary vectors $x,y \in \zo^m$, we let $\iprod{x,y} = \sum_{i=1}^m x_i y_i$ (the inner product of $x$ and $y$), and let $x \xor y = (x_1 \xor y_1,\ldots ,x_m \xor y_m)$ (i.e., the bit-wise XOR of $x$ and $y$).
To prove \cref{lemma:KA-approx-to-weak}, we use the following weak version of Goldreich Levin \cite{GL89}.

\begin{lemma}\label{lem:GL}
	There exists a \ppt oracle-aided algorithm $\Dec$ such that the following holds for every $n \in \bbN$. Let $m \leq  n$, $x \in \zo^m$ and $\Ac$ be an algorithm such that 
	\begin{align*}
		\ppr{r \sim \zo^m}{\Ac(r) = \iprod{x, r} \modd 2} > 3/4 + 0.01.
	\end{align*}
	Then $\pr{\Dec^{\Ac}(1^n, 1^m) = x} \geq 1 - \negl(n)$.
\end{lemma}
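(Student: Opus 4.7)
The plan is to adapt the standard Blum--Luby--Rubinfeld self-correction argument, which in the ``$3/4 + \eps$'' regime is substantially simpler than the full Goldreich--Levin list-decoding.

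First I would describe the decoder. Let $e_i \in \zo^m$ denote the $i$-th standard basis vector. On input $1^n, 1^m$ with oracle access to $\Ac$, the decoder $\Dec$ will, for each coordinate $i \in [m]$, sample $k = \Theta(n)$ independent uniform strings $r_1,\ldots,r_k \sim \zo^m$, compute
\begin{align*}
    b_{i,j} \;=\; \Ac(r_j) \,\xor\, \Ac(r_j \xor e_i), \qquad j \in [k],
\end{align*}
and output $\tx = (\tx_1,\ldots,\tx_m)$ where $\tx_i$ is the majority of $b_{i,1},\ldots,b_{i,k}$. This is manifestly a polynomial-time oracle algorithm (note $m \leq n$, so there are at most $n$ coordinates and $\poly(n)$ total queries).

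Next I would analyze correctness. The key observation is $\GF(2)$-bilinearity of the inner product: whenever both $\Ac(r_j) = \iprod{x,r_j}$ and $\Ac(r_j \xor e_i) = \iprod{x, r_j \xor e_i}$ hold, we get
\begin{align*}
    b_{i,j} \;=\; \iprod{x, r_j} \,\xor\, \iprod{x, r_j \xor e_i} \;=\; \iprod{x, e_i} \;=\; x_i.
\end{align*}
Since both $r_j$ and $r_j \xor e_i$ are marginally uniform over $\zo^m$, the hypothesis $\ppr{r}{\Ac(r) \neq \iprod{x,r}} < 1/4 - 0.01$ combined with a union bound over these two (correlated) queries yields $\pr{b_{i,j} = x_i} > 1/2 + 0.02$. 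Crucially the $r_j$'s are mutually independent, so a Chernoff/Hoeffding bound shows that for $k = \Theta(n)$ the majority $\tx_i$ equals $x_i$ except with probability $2^{-\Omega(n)}$. A final union bound over the $m \leq n$ coordinates then gives $\pr{\tx = x} \geq 1 - m \cdot 2^{-\Omega(n)} \geq 1 - \negl(n)$, as desired.

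I expect no significant obstacle. The only delicate point is that the hypothesis must read $> 3/4 + 0.01$ rather than merely $> 3/4$, so that after the union bound across the two queries $\Ac(r_j)$ and $\Ac(r_j \xor e_i)$ a \emph{positive constant} per-coordinate advantage survives for Chernoff to kick in. This is precisely why the lemma is stated in the ``weak'' form and needs none of the pairwise-independent hashing or list-decoding machinery used in the full Goldreich--Levin theorem (which handles the much harder $1/2 + \eps$ regime). A sanity check worth mentioning: the pairs $\{r_j, r_j \xor e_i\}$ are not jointly independent across different choices of $i$, but independence is invoked only within a single fixed coordinate, so this correlation is harmless.
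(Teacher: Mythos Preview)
Your proposal is correct and essentially identical to the paper's proof: both decode each bit $x_i$ by computing $\Ac(r) \xor \Ac(r \xor e_i)$ for $\Theta(n)$ fresh uniform $r$'s, use the union bound over the two (marginally uniform) queries to get a per-sample advantage above $1/2$, take a majority via Hoeffding, and finish with a union bound over the $m \leq n$ coordinates. Your write-up even makes explicit two points the paper leaves implicit (why the $+0.01$ slack is needed, and why cross-coordinate correlation of the query pairs is harmless).
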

\begin{proof}
	We use $\Ac$ to decode each bit of $x$ separately. For every $i$, let $e_i \in \zo^m$ be the vector that has $1$ in the \ith entry and $0$ everywhere else. Observe that
	
	\begin{align*}
		\lefteqn{\ppr{r \la \zo^m}{\set{\Ac(r) = \iprod{x, r} \modd 2} \land \set{\Ac(r \xor e_i) =  \iprod{x, r \xor e_i} \modd 2}}}\\
		&\geq 1 - 2\cdot \ppr{r \la \zo^m}{\set{\Ac(r) \neq \iprod{x, r} \modd 2}}\\
		&\geq 1/2 + 0.01
	\end{align*}
	By the linearity of the inner product we deduce that,
	\begin{align*}
		\ppr{r \la \zo^m}{\Ac(r) \xor \Ac(r \xor e_i) = x_i} \geq 1/2 + 0.01.
	\end{align*}
	Let $\Dec$ be the algorithm that for every $i$, computes $\Ac(r) \xor \Ac(r \xor e_i)$ for $n$ random values of $r \la \zo^m$, and let $x_i'$ be the majority of the outputs. Then $\Dec$ outputs $x' = (x_1', \ldots, x_n')$. By Hoeffding's inequality \cref{fact:Hoef}, each $x_i'$ is equal to $x_i$ with all but $e^{-\Omega(n)}$ probability. Since $m \leq n$ we conclude by the union bound that the above is true for all $i$'s simultaneously with all but $\negl(n)$ probability, as required. 

\end{proof}

We now ready to prove \cref{lemma:KA-approx-to-weak} that transforms an approximate-agreement protocol into a weak key-agreement protocol.

\begin{protocol}[Weak key-agreement protocol $(\Pc_1,\Pc_2)$]\label{protocol:weakKA}
	\item \textbf{Input:} $1^n$.
	
	\item \textbf{Access:} An $(\alpha, \beta)$-approximate-agreement protocol $(\Pc_1', \Pc_2')$.
	
	\item Operation:~
	\begin{itemize}
		
		\item Let $\gamma = \sqrt{\alpha(n) \beta(n)}$, and let $\cB = \set{-1, -1 + \gamma, -1 + 2\gamma, \ldots, -1 + \floor{2/\gamma}\cdot \gamma}$ be a division of $[-1,1]$ into buckets, each can be represented using $m = \ceil{\log_2 (2/\gamma)}$ bits. 
		
		\item The parties (jointly) emulate $(\Pc_1', \Pc_2')$ on input $1^n$, where each $\Pc_i$ takes the role of $\Pc_i'$. Let $o_i'$ be the output of the emulated $\Pc_i'$.\label{step:apndx:emulation}
		
		\item $\Pc_1$ chooses $v \la [0,\gamma]$ and $r \la \zo^{m}$ and sends them to $\Pc_2$.\label{step:apndx:offset}
		
		\item Each $\Pc_i$ computes $s_i \in \zo^m$ as the binary representation of the bucket $b_i = \argmin_{b \in \cB}\set{\size{b - (o_i' + v)}}$, and (locally) outputs $o_i = \iprod{s_i, r} \modd 2$.
		
	\end{itemize}
\end{protocol}

\begin{proof}[Proof of \cref{lemma:KA-approx-to-weak}]
	Let $(\Pc_1', \Pc_2')$ be an $(\alpha, \beta)$-approximate-agreement protocol where $\alpha(n) \cdot \beta(n) \geq 2^{-n}$ and $\alpha(n)/\beta(n) \leq n^{-c}$ for a constant $c>0$ and large enough $n$.
	We prove the lemma by showing that \cref{protocol:weakKA} $(\Pc_1, \Pc_2)$ is an $(1 - n^{-c/2}, \: 0.9)$-key-agreement protocol. 
	
	Fix large enough $n \in \bbN$ and consider a random execution of $(\Pc_1, \Pc_2)(1^n)$. Let $O_1'$, $O_2'$, $V$, $R$, $B_1$, $B_2$, $O_1$, $O_2$ be the (r.v.'s of the) values of $o_1'$, $o_2'$, $v$, $r$, $b_1$, $b_2$, $o_1$, $o_2$ in the execution, let $T$ be the transcript of the execution, and let $T'$ be the transcript of the emulated execution $(\Pc_1', \Pc_2')(1^n)$ in \stepref{step:apndx:offset}.
	By the approximate agreement property of $(\Pc_1', \Pc_2')$, it holds that 
	
	\begin{align}\label{eq:apndx:approx_agr}
		\pr{\size{O_1' - O_2'} \leq \alpha} \geq 1 - \negl(n)
	\end{align}

	Since $V$ is independent of $O_1'$ and $O_2'$, it holds that
	
	\begin{align}
		\pr{B_1 = B_2 \mid \size{O_1' - O_2'} \leq \alpha} \geq 1 - \alpha/\gamma \geq 1 - n^{-c/2}
	\end{align}

	Hence, the agreement of $(\Pc_1, \Pc_2)$ holds by the following computation. 
	
	\begin{align*}
		\pr{O_1 = O_2}
		&\geq  \pr{B_1 = B_2}\\
		&\geq \pr{B_1 = B_2 \mid \size{O_1' - O_2'} \leq \alpha} \cdot \pr{\size{O_1' - O_2'} \leq \alpha}\\
		&\geq (1 - n^{-c/2})\cdot (1 - \negl(n))\\
		&= 1 - n^{-c/2} - \negl(n)
	\end{align*}

	In order to prove the secrecy of $(\Pc_1, \Pc_2)$, assume towards a contradiction that there exists a \ppt $\Ac$ such that 
	
	\begin{align}
		\pr{\Ac(T) = O_1} \geq 0.9
	\end{align}

	By \cref{lem:GL}, there exists a \ppt oracle-aided algorithm $\Dec$ such that
	\begin{align}
		\pr{\Dec^{\Ac}(T) = S_1} \geq 1 - \negl(n).
	\end{align}

	Let $\Ac'$ be the \ppt algorithm that given a transcript $t'$ of the execution of $(\Pc_1', \Pc_2')$, samples $v \la [0, \gamma]$ and $r \la \zo^m$ (as in \stepref{step:apndx:offset} of \cref{protocol:weakKA}), computes $t = (v, r, t')$ and outputs the bucket $b \in \cB$ that is represented by the binary string $\Dec^{\Ac}(t) \in \zo^m$. 
	Since the transcript $t$ induced by $\Ac'(T')$ is distributed the same as $T$, we conclude that

	\begin{align*}
		\pr{\size{\Ac'(T') - O_1'} \leq \beta}
		&\geq \pr{\size{\Ac'(T') - O_1'} \leq 2 \gamma}\\
		&\geq \pr{\size{\Ac'(T') - O_1'} \leq 2 \gamma \mid \size{O_1' - O_2'} \leq \alpha} - \negl(n)\\
		&\geq \pr{\Dec^{\Ac}(T) = S_1 \mid  \size{O_1' - O_2'} \leq \alpha} - \negl(n)\\
		&\geq \pr{\Dec^{\Ac}(T) = S_1} - \negl(n)\\
		&\geq 1 - \negl(n),
	\end{align*}
	in contradiction to the secrecy property of $(\Pc_1', \Pc_2')$.

\end{proof}

\end{document}